\theoremstyle{plain}
\newtheorem{theorem}{Theorem}[section]
\newtheorem{lemma}[theorem]{Lemma}
\newtheorem{corollary}[theorem]{Corollary}
\theoremstyle{definition}
\newtheorem{definition}[theorem]{Definition}
\theoremstyle{remark}
\newtheorem{remark}[theorem]{Remark}
\newcommand{\N}[0]{\ensuremath{\mathbb{N}}\xspace}
\newcommand{\C}[0]{\ensuremath{\mathcal{C}}\xspace}
\newcommand{\D}[0]{\ensuremath{\mathcal{D}}\xspace}
\newcommand{\E}[0]{\ensuremath{\mathcal{E}}\xspace}
\newcommand{\X}[0]{\ensuremath{\mathcal{X}}\xspace}
\newcommand{\Y}[0]{\ensuremath{\mathcal{Y}}\xspace}
\newcommand{\term}[1]{\emph{#1}}
\newcommand{\VC}[1]{\ensuremath{\mathrm{VC}\left(#1\right)}\xspace}
\newcommand{\proj}[2]{\ensuremath{\mathrm{proj}_{#1}\left(#2\right)}\xspace}
\newcommand{\ie}[0]{\emph{i.e.},\xspace}
\title{Unlabelled Sample Compression Schemes for Intersection-Closed Classes and Extremal Classes}
\author{%
  J. Hyam Rubinstein \\
  School of Mathematics \& Statistics\\
  University of Melbourne\\
  Parkville, VIC 3052, Australia \\
  \texttt{joachim@unimelb.edu.au} \\
  \And
  Benjamin I. P. Rubinstein \\
  School of Computing \& Information Systems\\
  University of Melbourne\\
  Parkville, VIC 3052, Australia \\
  \texttt{brubinstein@unimelb.edu.au} \\
}
\begin{document}

\maketitle

\begin{abstract}
The sample compressibility of concept classes plays an important role in learning theory, as a sufficient condition for PAC learnability, and more recently as an avenue for robust generalisation in adaptive data analysis. Whether compression schemes of size $O(d)$ must necessarily exist for all classes of VC dimension $d$ is unknown, but conjectured to be true by Warmuth. Recently \citet{unlabeledAmple} gave a beautiful unlabelled sample compression scheme of size VC dimension for all maximum classes: classes that meet the Sauer-Shelah-Perles Lemma with equality. They also offered a counterexample to compression schemes based on a promising approach known as corner peeling. In this paper we  simplify and extend their proof technique to deal with so-called extremal classes of VC dimension $d$ which contain maximum classes of VC dimension $d-1$. A criterion is given which would imply that all extremal classes admit unlabelled compression schemes of size $d$. We also prove that all intersection-closed classes with VC dimension $d$ admit unlabelled compression schemes of size at most $11d$. 
\end{abstract}

\section{Introduction}

\citet{LW86} initiated the study of sample compression schemes as an alternative characterisation of probably approximately correct (PAC) learnability, to finite Vapnik-Chervonenkis (VC) dimension. Informally a sample compression scheme for a concept class comprises a compressor and reconstructor: while the compressor represents samples labelled by the class as a labelled or unlabelled subsample, the reconstructor must be able to recover a concept consistent with the original labelled sample given only this representation. \citet{LW86} demonstrated a remarkably simple proof of PAC learnability for any concept classes with labelled compression schemes provided compressed representations have bounded size, constant in the original sample size. They conjectured that bounded labelled compression schemes should also be necessary for PAC learnability---completing the characterisation---specifically that all VC-$d$ concept classes (those having VC dimension $d$) have labelled compression schemes of size $d$. This conjecture, for size $O(d)$, remains open now more than 35 years later, and stands as one of the oldest open problems in learning theory. In addition to proving PAC generalisation bounds~\citep{LBS04,Lang05}, compression has found important connections to generalisation in the adaptive data analysis setting~\citep{cummings2016adaptive}.

While \citet{LW86} conjectured the existence of labelled schemes of size $d$ for all VC-$d$ classes, \citet{W03} later relaxed this to $O(d)$, while \citet{KW07} conjectured size $O(d)$ \term{unlabelled} compression schemes. (In unlabelled schemes, the compressor's representations omit labels; such schemes of size $k$ imply labelled schemes of size $k$.) Indeed VC-$d$ classes are known that cannot be unlabelled compressed to size $d$~\citep{PT}.

Notable progress towards resolving the sample compression conjecture has involved compressing families of concept classes. Shortly after the original conjecture \citet{F89} proved that all maximum concept classes $\C$ (those meeting the Sauer-Shelah-Perles Lemma with equality) can be label compressed to size $\VC{\C}$~\citep{floyd1995sample}. \citet{unlabeledAmple} recently proved the same bound but with unlabelled compression, while also providing an elegant counter example to the same result for so-called corner peeling schemes. This shows that the constructions of corner peeling schemes by~\citet{KW07} and \citet{Rubinsteins09} are incorrect. \citet{MY} proved that all VC-$d$ classes can be label compressed to size exponential in $d$. Recently \citet{PT} offered examples of VC classes which cannot admit unlabelled compression schemes of size $d$.

\citet{MW} discover labelled compression schemes for extremal classes, which contain cubes for any coordinates shattered by the class, and generalise maximum classes.  It is unknown if all extremal classes admit unlabelled compression schemes of size $O(d)$. 
In this paper, we give a criterion on extremal classes to admit unlabelled compression schemes of size VC dimension

Note that in \citet{unlabeledAmple}, a beautiful counterexample is given to the possibility of all maximum classes having corner peeling unlabelled compression schemes of size VC dimension. This shows that the constructions of corner peeling schemes in \citet{KW07} and \citet{Rubinsteins09} are incorrect. 

A promising approach to the general conjecture---and a strong motivation for compressing special families of concept classes--is via embedding, as a compression scheme restricts to any subclass. 
So, there is interest in embedding general VC-classes into larger classes with good compression properties. %
\citet{RRB15} proved that VC-$d$ classes cannot 
always be embedded in maximum classes of VC dimension less than $2d-1$. 
Embedding VC-$d$ classes into maximum classes of VC-dimension $O(d)$ could still achieve  compression to size $O(d)$ in general.

Intersection-closed classes---those closed under coordinate-wise multiplication---are another promising focus for compression, as any class  can be embedded into an intersection-closed class. %
In this paper, we show that any intersection-closed class can be embedded into an extremal intersection-closed class with the VC dimension increasing from $d$ to $11d$. These extremal intersection-closed classes have VC-sized unlabelled compression schemes by \citet{unlabeledAmple}. As such, we establish for the first time that intersection-closed classes can be unlabelled compressed to size linear in their VC dimension. We offer a short proof of the result by \citet{unlabeledAmple} that maximum classes have VC dimension $d$-size unlabelled compression schemes and extend this to extremal classes which contain maximum classes of VC dimension $d-1$. Finally we give an efficient method to compute the VC dimension of the intersection closure of a class. In particular, this could be useful in searching for collections of classes for which the VC dimension does not increase by much when passing to the intersection closure and then constructing the above unlabeled compression scheme.

\section{Preliminaries}

Given $n\in\N$ we call a subset of the binary $n$-cube $\C\subseteq\{0,1\}^n$, a \term{concept class} with elements $v\in\C$ \term{concepts}. These come from restrictions of classifier families, on samples of $n$ instances.\footnote{Consider family of classifiers $\mathcal{F}$ mapping input domain $\mathcal{X}$ to labels $\{0,1\}$. On sample $\mathbf{x}\in\mathcal{X}^n$, the family is represented by induced concept class $\mathcal{C}=\{(f(x_1),\ldots,f(x_n): f\in\mathcal{F}\}$.} We equate each instance in $[n]=\{1,\ldots,n\}$ with \term{coordinates}, \term{axes} or \term{colours} of the $n$-cube.
An important combinatorial parameter is the VC dimension, which is used to bound concept class cardinality.

\begin{definition}[\citealp{VC71}]
	The \term{VC dimension} of concept class $\C\subseteq\{0,1\}^n$ is defined as
		$\VC{\C} = \max\left\{|I| : I\subseteq [n], \proj{I}{\C}=\{0,1\}^{|I|}\right\}$,
	where $\proj{I}{\C}=\left\{\left(v_i\right)_{i\in I} : v\in\C \right\}$ is the set of \term{coordinate projections} of the concepts of $\C$ on coordinates $I\subseteq [n]$. \linebreak[4] $\C$ is said to \term{shatter}  $I$ when the projection $\proj{I}{\C}=\{0,1\}^{|I|}$ is onto the $|I|$-cube.
\end{definition}

\begin{lemma}[Sauer-Shelah-Perles Lemma, \citealp{VC71,SS72,S72}]
	The cardinality of any concept class $\C\subseteq\{0,1\}^n$ is bounded by $|\C|\leq\sum_{i=0}^{\VC{\C}}{n \choose i}$.
\end{lemma}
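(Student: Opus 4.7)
The plan is to prove the bound by double induction on the ambient dimension $n$ and the VC dimension $d=\VC{\C}$, using the classical split on a distinguished coordinate. The base cases $d=0$ (where $\C$ must contain at most one concept, since two distinct concepts would disagree on some coordinate that is then shattered) and $n=0$ (where $\{0,1\}^0$ is a singleton) are both immediate, as the right-hand side $\sum_{i=0}^{d}\binom{n}{i}$ is always at least $1$.

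For the inductive step with $n\geq 1$ and $d\geq 1$, I would distinguish the last coordinate and decompose $\C$ into two derived subclasses of $\{0,1\}^{n-1}$. Let $\D=\proj{[n-1]}{\C}$ be the projection onto the first $n-1$ coordinates, and let $\E = \{u \in \{0,1\}^{n-1} : (u,0)\in\C \text{ and } (u,1)\in\C\}$ collect those projections whose \emph{both} lifts appear in $\C$. A direct counting argument then gives $|\C| = |\D| + |\E|$, since each $u \in \D$ is the projection of at least one concept of $\C$, contributing a second exactly when $u\in\E$. Trivially $\VC{\D}\leq d$; the critical observation is that $\VC{\E}\leq d-1$, because if $\E$ shatters $I\subseteq[n-1]$ then $\C$ shatters $I\cup\{n\}$, forcing $|I|+1\leq d$.

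The proof would conclude by applying the inductive hypothesis to obtain $|\D|\leq \sum_{i=0}^{d}\binom{n-1}{i}$ and $|\E|\leq \sum_{i=0}^{d-1}\binom{n-1}{i}$, then summing and invoking Pascal's identity $\binom{n-1}{i}+\binom{n-1}{i-1}=\binom{n}{i}$ to collapse the total to $\sum_{i=0}^{d}\binom{n}{i}$. The only step that will require genuine insight is the decomposition itself---specifically, the observation that demanding both lifts of a projection lie in $\C$ effectively consumes one unit of VC dimension; the remainder is routine binomial manipulation. Alternative derivations via Frankl's shifting operation (which replaces $\C$ by a downward-closed family of identical size and no larger VC dimension, for which the bound is immediate) or the Babai--Frankl polynomial method would also work, but this inductive decomposition is the most self-contained approach and the one I would present.
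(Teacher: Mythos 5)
The paper does not prove this lemma at all: it is stated as a classical result and attributed to \citet{VC71}, \citet{SS72} and \citet{S72}, so there is no in-paper argument to compare yours against. Your proof is the standard and correct one: the decomposition $|\C|=|\D|+|\E|$ with $\D=\proj{[n-1]}{\C}$ the projection and $\E$ the set of points with both lifts in $\C$ (what the paper would call the reduction $\C^{\{n\}}$), the key observation that shattering $I$ by $\E$ forces $\C$ to shatter $I\cup\{n\}$ so that $\VC{\E}\leq d-1$, and Pascal's identity to close the induction. Your base cases are handled correctly ($\VC{\C}=0$ forces $|\C|\leq 1$, and $n=0$ is trivial), and the alternatives you mention (shifting, the polynomial method) are indeed valid but unnecessary. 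In short: no gap, and your inductive decomposition is exactly the argument the cited literature supplies; it also meshes naturally with the projection/reduction machinery the paper uses elsewhere (\eg in its discussion of reductions and tails).
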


Concept classes meeting equality in Sauer's Lemma are called \term{maximum}~\citep{W87}.
A cubical--complex structure can be put on concept classes $\C\subseteq\{0,1\}^n$. To begin, each $v\in\C$ is a vertex, and pairs of Hamming-1 separated vertices are edges, of the \term{one-inclusion graph} of $\C$~\citep{HLW94}. Each higher-order cube is all points varying over a set of coordinates, the cube's \term{colours}; a cube's common values on non-colour coordinates make up its \term{anchor}. (Equivalently $d$-cubes are pre-images of points under a projection from the binary $(m+d)$-cube to the binary $m$-cube.) Any $d$-maximum class has a special structure as a complete collection of $d$-cubes: 

\begin{definition}[\citealp{RBR08}]\label{def:complete-collection}
	A set of $d$-cubes $\mathcal{S}$ of cardinality $n \choose d$ is called \term{$d$-complete} if for all $I\subseteq[n]$ of cardinality $d$, there exists $c_I\in\mathcal{S}$ with $\proj{I}{c_I}=\{0,1\}^d$.
\end{definition}

Another property of a concept class $\C$ is of being
\term{shortest-path closed}---also known as \term{isometric} \citep{unlabeledAmple}---where every pair of points in $\C$ has a shortest path contained in $\C$~\citep{KW07,Rubinsteins09}. 
We will be especially interested in \term{extremal} classes---also known as \term{ample} classes \citep{unlabeledAmple}.

\begin{definition}\label{def:extremal}
  A class $\C\subseteq\{0,1\}^n$ is \term{extremal} if whenever $\C$ shatters some $I\subseteq [n]$, then
  there is a subcube $c \subseteq \C$ which maps bijectively to the $|I|$-cube via $\proj{I}{\cdot}$. That is, $\proj{I}{c}=\{0,1\}^{|I|}$.
\end{definition}

All maximum classes are extremal and extremal classes are shortest-path closed~\citep{MW}, while the complements of extremal classes are themselves extremal~\citep{Bandelt06,Bollobas95,lawrence1983lopsided}. As an example, a VC one dimensional maximum (respectively extremal) class is a tree with one (respectively at most one) edge for each colour. An extremal counter-part to Sauer's Lemma was independently discovered by \citet{pajor1985sous}, \citet{Bollobas95}, \citet{dress1997towards}, and \citet{anstee2002shattering}.

\begin{lemma}[Sandwich Lemma]\label{sandwich-lemma}
	For any extremal class $\C$, its cardinality is equal to both the number of shatterings of $\C$ and the number of cube types (an equivalence class of cubes which all have the same set of colours) in $\C$.
\end{lemma}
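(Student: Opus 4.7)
The plan is to deduce the lemma from the classical Bollob\'as--Radcliffe--Pajor sandwich inequality for arbitrary classes, by using extremality to collapse inequalities to equalities. Call $I \subseteq [n]$ \emph{strongly shattered} by $\C$ if there exists a subcube $c \subseteq \C$ with $\proj{I}{c} = \{0,1\}^{|I|}$; the number of cube types of $\C$ is then exactly the count $\mathrm{str}(\C)$ of strongly shattered sets, since distinct colour sets of cubes correspond bijectively to distinct strongly shattered sets. Writing $\mathrm{sh}(\C)$ for the count of shattered sets, the target reduces to showing $|\C| = \mathrm{sh}(\C) = \mathrm{str}(\C)$.

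Definition \ref{def:extremal} asserts precisely that in an extremal class every shattered set is strongly shattered, while the reverse implication holds for any class (a cube projecting onto $\{0,1\}^{|I|}$ forces $\C$ to do the same). Hence $\mathrm{sh}(\C) = \mathrm{str}(\C)$ for extremal $\C$, and it suffices to establish the generic sandwich inequality $\mathrm{str}(\C) \leq |\C| \leq \mathrm{sh}(\C)$ for arbitrary $\C \subseteq \{0,1\}^n$; this will then force all three quantities to coincide whenever $\mathrm{sh}(\C) = \mathrm{str}(\C)$.

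I would prove the sandwich by induction on $n$, the case $n=0$ being immediate. For the step, fix any coordinate $i \in [n]$ and set $\C^b = \{v \in \{0,1\}^{[n]\setminus\{i\}} : (v,b) \in \C\}$ for $b \in \{0,1\}$, together with the reductions $\mathcal{A} = \C^0 \cap \C^1$ and $\mathcal{B} = \C^0 \cup \C^1$, both viewed inside $\{0,1\}^{[n]\setminus\{i\}}$. Counting gives $|\C| = |\C^0| + |\C^1| = |\mathcal{A}| + |\mathcal{B}|$. Two structural observations then drive the induction. First, every set shattered by $\mathcal{B}$ is shattered by $\C$, and every set $J \subseteq [n]\setminus\{i\}$ shattered by $\mathcal{A}$ produces a set $J \cup \{i\}$ shattered by $\C$ (using $\mathcal{A} \subseteq \C^0, \C^1$); these two families of shattered sets are disjoint (one avoids $i$, the other contains $i$), so $\mathrm{sh}(\C) \geq \mathrm{sh}(\mathcal{A}) + \mathrm{sh}(\mathcal{B})$. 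Second, given a witness cube $c \subseteq \C$ of colour set exactly $I$ that strongly shatters $I$, either $i \notin I$ and $c$ has a fixed value $b$ on coordinate $i$, so projecting away $i$ gives a cube of colour set $I$ contained in $\C^b \subseteq \mathcal{B}$; or $i \in I$ and $c$ splits along coordinate $i$ into two cubes with identical projection to $[n]\setminus\{i\}$, yielding a cube of colour set $I \setminus \{i\}$ contained in $\C^0 \cap \C^1 = \mathcal{A}$. Distinct $I$'s yield distinct images, so $\mathrm{str}(\C) \leq \mathrm{str}(\mathcal{A}) + \mathrm{str}(\mathcal{B})$. Applying the inductive hypothesis to $\mathcal{A}$ and $\mathcal{B}$ now closes both sandwich inequalities in a single stroke.

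The main obstacle is careful bookkeeping rather than conceptual depth: one must verify that the ``descent'' maps in the strong-shattering step are well-defined and injective, that any witness cube $c$ may be taken to have colour set exactly $I$ (not merely containing $I$), and that the two shattering contributions stay genuinely disjoint under $J \mapsto J \cup \{i\}$. None of this is difficult, but each point has to be handled cleanly for the induction to close.
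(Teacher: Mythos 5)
The paper does not supply a proof of Lemma~\ref{sandwich-lemma}: it states the result and attributes it to \citet{pajor1985sous}, \citet{Bollobas95}, \citet{dress1997towards}, and \citet{anstee2002shattering}, so there is no in-paper argument to compare yours against. That said, your proof is correct, and it is the standard argument from those sources: split on a coordinate $i$, form the two fibres $\C^0,\C^1$, pass to the intersection $\mathcal{A}$ and union $\mathcal{B}$, note $|\C|=|\mathcal{A}|+|\mathcal{B}|$, and show that shattered sets only multiply (giving $\mathrm{sh}(\C)\ge\mathrm{sh}(\mathcal{A})+\mathrm{sh}(\mathcal{B})$) while strongly shattered sets only contract (giving $\mathrm{str}(\C)\le\mathrm{str}(\mathcal{A})+\mathrm{str}(\mathcal{B})$), then close by induction. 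The reduction of the lemma to the generic sandwich inequality $\mathrm{str}(\C)\le|\C|\le\mathrm{sh}(\C)$ via the observation that extremality is precisely $\mathrm{sh}(\C)=\mathrm{str}(\C)$ is also exactly how the cited papers frame it.

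Two small points of care, which you already flag but which are worth nailing down in a final write-up: (i) normalise witness cubes so the colour set is exactly $I$ (fix the extra coordinates of any oversized witness), otherwise the claimed bijection between cube types and strongly shattered sets is only a surjection; and (ii) in the base case, note that for $\C=\emptyset$ even the empty set is not shattered, so all three counts are $0$, while for a singleton class all three equal $1$. With these checked, the induction closes cleanly, and the argument is sound.
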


Extremal and maximum classes have an elegant recursive structure that is best studied by inductive arguments. One such tool is the projection operator $\proj{I}{\C}$. The \term{reduction} $\C^I\subseteq\proj{I}{\C}$ is the subset of points with non-unique pre-images under the projection, while the \term{tail} is the subset of $\proj{I}{\C}$ with unique pre-image under the projection~\citep{W87,KW07}. %

Unlabelled compression schemes are defined by \term{representation mappings}. A party known as the \term{compressor} uses a representation mapping to summarise a $\C$-labelled sample of any size with a small unlabelled subset of instances. A second party, the \term{reconstructor}, can invert this representation thanks to condition~1, to a concept that is consistent with the original labelled sample, thanks to condition~2,~\citep{unlabeledAmple}.

\begin{definition}[\citealp{KW07}]
	Mapping $r$ is a \term{representation mapping of size $k<n$}, also known as an \term{unlabelled compression scheme of size $k$}, of class $\C\subseteq\{0,1\}^n$ if it satisfies:
	\setlist{nolistsep}
	\begin{enumerate}[itemsep=2pt]
		\item $r$ is an injection between $\C$ and subsets of $[n]$ of size at most $k$; and
		\item (non-clashing) $\proj{r(v)\cup r(v')}{v}\neq \proj{r(v)\cup r(v')}{v'}$ for all $v\neq v'\in\C$.
	\end{enumerate}
\end{definition}

\section{The intersection closure operator}

Examples of intersection-closed classes include axis-aligned hyperrectangles in $\mathbb{R}^m$, monomials, which are unions of orthogonal subcubes, and any union of orthogonal intersection-closed subclasses~\citep{MW}. 
Closed-below classes~\citep{RBR08} are intersection closed and extremal. All extremal classes of VC dimension $1$ are intersection closed, so long as the origin is located in the class~\citep{MW}. 

Note the property of being intersection closed depends on the choice of origin.

We will use the notation $v \odot w$ to denote the intersection of the vertices $v, w$. This is just the operation of bitwise Boolean multiplication $v \odot w=(v_1w_1, \ldots, v_nw_n)$. We can likewise let $u^1 \odot \ldots  \odot u^k =(\prod_{i=1}^k u^i_1,\ldots,\prod_{i=1}^k u^i_n)$ denote the (iterated) intersection of vertices $u^1, \ldots, u^k$. %

\begin{definition}
 A class $\C$ in the binary $n$-cube is called \term{intersection closed} if $v \odot w \in \C$, whenever $v,w \in \C$. %
Let $\overline \C =  \odot (\C) = \{u^1 \odot \ldots \odot u^k) : k\leq n, \{u^1,\ldots,u^k\}\subseteq\C\}$ denote the \term{intersection closure} of any class $\C$ obtained via intersections of all finite subsets of $\C$'s concepts.
\end{definition}

\begin{lemma}\label{closure}
For any concept class $\C\subset\{0,1\}^n$, the intersection closure $\overline \C$ is the unique smallest intersection-closed class containing $\C$, and it is contained in any intersection-closed class containing $\C$. Moreover for any $J\subseteq [n]$, we must have $\overline {\proj{J}{\C}} = \proj{J}{\overline \C}$.
\end{lemma}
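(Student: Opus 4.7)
The plan is to dispatch the three claims (containment, closure, minimality) of $\overline{\C}$ as an intersection-closed hull, and then handle the projection-commutation identity using that $\proj{J}{\cdot}$ distributes over $\odot$. The whole proof is essentially an algebraic unpacking of the definition.

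First I would verify $\C \subseteq \overline{\C}$ by taking $k=1$ in the definition. Next, to see that $\overline{\C}$ itself is intersection-closed, I would take $v=u^1 \odot \cdots \odot u^j$ and $w=w^1 \odot \cdots \odot w^\ell$ in $\overline{\C}$ with $j,\ell \le n$ and each $u^i, w^i \in \C$; by associativity and commutativity of bitwise Boolean multiplication, $v\odot w$ is the iterated intersection of $j+\ell$ elements of $\C$. The only subtle point is that the definition caps $k\le n$, so I must argue that an iterated intersection of $m>n$ elements of $\{0,1\}^n$ can always be represented by an iterated intersection of at most $n$ of them: viewing each element as its $1$-support in $[n]$, incorporating terms one by one can strictly decrease the running intersection at most $n$ times before it stabilises, so we may prune down to at most $n$ terms without changing the product. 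This is the one mildly technical step. For minimality, a straightforward induction on $k$ shows that any intersection-closed $\mathcal{D}\supseteq\C$ must contain every $u^1\odot\cdots\odot u^k$; combined with closure of $\overline{\C}$, this gives that $\overline{\C}$ is the unique smallest intersection-closed superclass of $\C$.

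For the projection identity, the central observation is that $\proj{J}{\cdot}$ is a homomorphism with respect to $\odot$: for any $v,w\in\{0,1\}^n$, $\proj{J}{v\odot w} = \proj{J}{v}\odot \proj{J}{w}$, since both sides agree coordinate-wise on $J$. For the inclusion $\proj{J}{\overline{\C}}\subseteq \overline{\proj{J}{\C}}$, I would take $w=\proj{J}{u^1\odot\cdots\odot u^k}\in \proj{J}{\overline{\C}}$ and rewrite it as $\proj{J}{u^1}\odot\cdots\odot \proj{J}{u^k}$, exhibiting it as an iterated intersection of elements of $\proj{J}{\C}$. For the reverse inclusion, I would observe that $\proj{J}{\overline{\C}}$ is itself intersection-closed (again by the homomorphism property, pulling back intersections to $\overline{\C}$ where they already live) and contains $\proj{J}{\C}$, so by the minimality established above it must contain $\overline{\proj{J}{\C}}$.

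The only place where I expect any friction is the $k \le n$ bound on the number of factors in the definition of $\overline{\C}$: one must check that this truncation does not lose anything, both for showing $\overline{\C}$ is intersection-closed (where combining two representations could produce up to $2n$ factors that then need to be pruned back) and to ensure the homomorphism argument yields elements that fit the bound in $\{0,1\}^{|J|}$. Both uses rest on the same stabilisation fact above, so once that lemma is isolated the rest is routine.
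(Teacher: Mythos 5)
The paper states this lemma without proof, evidently treating it as routine, so there is no paper argument to compare against; your proof is correct. You are also right to flag the $k\le n$ cap in the definition of $\overline{\C}$ as the only spot requiring care: since $\odot$ is idempotent and the running intersection (taken as starting from the all-ones vector) can lose a coordinate at most $n$ times before stabilising, any iterated intersection of arbitrarily many elements of $\C$ is already realised by at most $n$ of them, which simultaneously handles closure of $\overline{\C}$ under $\odot$ and the passage from $k\le n$ factors to $k\le |J|$ factors after projecting to $\{0,1\}^{|J|}$. The remaining steps — the homomorphism identity $\proj{J}{v\odot w}=\proj{J}{v}\odot\proj{J}{w}$, minimality by induction on $k$, and using minimality to get the reverse inclusion $\overline{\proj{J}{\C}}\subseteq\proj{J}{\overline{\C}}$ once $\proj{J}{\overline{\C}}$ is seen to be intersection-closed — are all sound.
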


Next is a useful result on computing the VC dimension of the intersection closure of a class. The key idea is that to verify $\proj{J}{\overline \C}$ shatters, it is necessary and sufficient to show $\overline \C$ has $|J|+1$ vertices which project to the vertices with at most one coordinate $0$.

\begin{definition}
Suppose $\C$ is a class in the binary $n$-cube with VC dimension $d$.  We will say that $\C$ satisfies the \term{$k$-close cube condition} if there is a vertex $v\in\{0,1\}^n$ and a complete union of $(n-k-1)$-cubes in the complement of $\C$ so that each cube is distance at most one from $v$. For any $k'>k$, $C$ then also satisfies the $k'$-close cube condition.
\end{definition}

\begin{theorem}~\label{thm:compute-vc}
  Suppose $\C$ is a class in the binary $n$-cube. 
  The smallest VC dimension $\overline d$ for the intersection closure of $\C$ over all choices of origin, is the same as the smallest $k$ for which $\C$ satisfies the $k$-close cube condition.
\end{theorem}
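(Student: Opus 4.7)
The plan is to characterise, for a chosen origin $v$, the VC dimension of the intersection closure $\overline\C^{(v)}$ by translating ``$J\subseteq[n]$ shattered by $\overline\C^{(v)}$'' into ``$(n-k-1)$-cubes of $\C^c$ near $v$'', and then minimise over $v$. Write $\odot_v$ for the intersection operation with respect to origin $v$, so that for any $J\subseteq[n]$ the ``agreement with $v$'' set satisfies $\mathrm{agree}_J(u\odot_v w) = \mathrm{agree}_J(u)\cup\mathrm{agree}_J(w)$. Consequently any vertex whose agreement with $v$ on $J$ has size at most one is maximal under $\odot_v$ in the sense that it can only arise as an intersection of operands that already project identically to the same ``top'' vertex of $\{0,1\}^J$.

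Combining this maximality observation with Lemma~\ref{closure}, which gives $\proj{J}{\overline\C}=\overline{\proj{J}{\C}}$, yields the key equivalence hinted at just before the theorem: $\overline\C^{(v)}$ shatters $J$ iff $\proj{J}{\C}$ already contains the $|J|+1$ vertices whose agreement with $\proj{J}{v}$ has size at most one. The forward direction uses maximality: those top vertices lie in $\overline{\proj{J}{\C}}$ and so must already belong to $\proj{J}{\C}$. The reverse direction builds every vertex of $\{0,1\}^J$ by iterating $\odot_v$ on an appropriate subset of the $|J|+1$ top vertices, since the union of their agreement sets can realise any prescribed subset of $J$.

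Next I would reinterpret missing projected vertices via the complement $\C^c \subseteq \{0,1\}^n$. A vertex $w\in\{0,1\}^J$ is absent from $\proj{J}{\C}$ iff the $(n-|J|)$-cube with anchor $J$ and anchor values $w$ lies entirely in $\C^c$. Applying the key equivalence contrapositively, $\VC{\overline\C^{(v)}}\leq k$ iff for every $J$ of size $k+1$ some top vertex is missing, equivalently iff there exists an $(n-k-1)$-cube $c_J \subseteq \C^c$ with anchor $J$ whose anchor values lie within Hamming distance one of $\neg\proj{J}{v}$. Writing $v':=\neg v$, this says each $c_J$ is at Hamming distance at most one from $v'$ in the $n$-cube, and the family $\{c_J\}_{|J|=k+1}$ provides exactly one $(n-k-1)$-cube per anchor set of size $k+1$, which matches the ``complete union'' wording of the $k$-close cube condition. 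Minimising over $v$ (equivalently $v'$) gives $\overline d \leq k$ iff $\C$ satisfies the $k$-close cube condition, and hence the two ``smallest $k$'' quantities coincide; the monotonicity of the condition in $k$ already noted in the definition ensures both minima are attained.

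The main obstacle will be the maximality argument in the first paragraph together with the bookkeeping needed to align ``Hamming distance from cube $c_J$ to $v$'' with ``Hamming distance of anchor values to $\proj{J}{v}$''. Once that identification is secure, the rest is translating between the $\C$-perspective and the $\C^c$-perspective and carefully tracking how origin choice propagates through the intersection operation and the projection.
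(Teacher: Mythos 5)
Your proposal is correct and takes essentially the same route as the paper's proof: both use $\proj{J}{\overline\C}=\overline{\proj{J}{\C}}$ to reduce shattering of the closure on $J$ to the presence of the $|J|+1$ ``top'' vertices (those agreeing with the origin $v$ on at most one coordinate of $J$) in $\proj{J}{\C}$, then identify a missing top vertex with an $(n-|J|)$-cube of $\C^c$ at Hamming distance at most one from $\neg v$, and finally minimise over the choice of origin. One small wording issue worth flagging: if $\proj{J}{u\odot_v w}$ is a top vertex, the operands $u,w$ need not project to the \emph{same} top vertex (one may project to the anti-origin $\neg\proj{J}{v}$), but the conclusion you actually draw --- that any top vertex of $\overline{\proj{J}{\C}}$ must already lie in $\proj{J}{\C}$ --- is correct and is all the argument needs.
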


\begin{proof}
  Suppose first that $\overline d$ achieves the smallest VC dimension of the intersection closure of $\C$ over all choices of origin. Let $\overline C$ denote the corresponding intersection closure of $\C$. Then any projection $p$ of $\overline \C$ to a $({\overline d}+1)$-cube is not onto. Assume $p(\C)$ contained the vertex $w=(1,1, \ldots, 1)$ and all vertices $w^i =(1,1, \ldots, 1,0,1, \ldots, 1)$ with $0$ in the $i$th position and all other entries $1$. Then since $p(\overline \C)$ is intersection closed, the image is the whole $({\overline d}+1)$-cube contrary to assumption. The reason is that taking intersections of combinations of the vertices $w^i$ gives all the vertices in the $({\overline d}+1)$-cube. So at least one of the vertices $w,w^1, \ldots, w^{{\overline d}+1}$ must not be in $p(\C)$. But then there is an $(n-{\overline d}-1)$-cube with anchor of this form in the complement of $\C$ by taking the inverse image under $p$ of this vertex. By choosing $v$ as the vertex with all entries $1$ in the $n$-cube, we have shown that $\overline C$ satisfies the  $\overline d$-close cube condition. For clearly each of our $(n-{\overline d}-1)$-cubes in the complement of $\overline C$ has distance at most one from $v$. So this establishes that ${\overline d} \ge k$, where $k$ is the smallest value so that $\C$ satisfies the $k$-close cube condition. 
  
  Conversely suppose that $\C$ has a collection of $(n-k-1)$-cubes in its complement, validating that $\C$ satisfies the $k$-close cube condition. We claim that $\overline d \le k'$, where $\overline d$ is the VC dimension of the intersection closure of $\C$ for a suitable choice of origin. This shows that the smallest possible VC dimension $\overline d$ of the intersection closure of $\C$ over all choices of origin is not larger than the smallest possible $k$ where $\C$ satisfies the $k$-close cube condition.
 
  For suppose that any projection $p$ of $\overline \C$  to the $(k+1)$-cube is selected. By Lemma~\ref{closure}, $p(\overline \C)=\overline {p(\C)}$. In fact the intersection operator commutes with the projection operator, \ie $p(I(u,w))=I(p(u),p(w))$ so the claim follows by Lemma \ref{closure}. 
 But now by assumption, if we choose the origin so that $v$ is the vertex with all entries $1$, then there is an $(n-k-1)$-cube in the complement of $\C$ which has anchor with entries all $1$'s except for at most one entry $0$ and the anchor projects to a vertex of the $(k+1)$-cube. Hence the projection $p$ of $\overline \C$ is not onto and $\overline d \le k$.  
\end{proof}

We end with a useful property of the intersection closure operator. We will abuse notation by saying a class $A$ is \term{shortest path closed inside a class $B$}, if $A \subset B$ and any two vertices of $A$ are connected by a shortest path inside $B$. The main idea in the following theorem is that if $\C$ is shortest-path closed, then $X=  \{v: v =u \odot w, u, w \in \C\}$ is shortest path closed inside $\overline \C$. 

\begin{theorem}\label{Closure}
If $\C$ is shortest-path closed then its intersection closure $ \overline \C$ is also shortest-path closed.
\end{theorem}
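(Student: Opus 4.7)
The plan is to construct a shortest path (SP) inside $\overline{\C}$ between arbitrary $v_1,v_2\in\overline{\C}$, bootstrapping from the shortest-path closure of $\C$ via two elementary facts about $\odot$ and Hamming distance. \textbf{Observation A:} for any $x,y\in\{0,1\}^n$ the meet $x\odot y$ lies on a Hamming geodesic from $x$ to $y$, since $d(x,x\odot y)+d(x\odot y,y)=|\{j:x_j=1,y_j=0\}|+|\{j:x_j=0,y_j=1\}|=d(x,y)$. In particular, for $v_1,v_2\in\overline{\C}$ we have $v_1\odot v_2\in\overline{\C}$, so it suffices to exhibit SPs from each of $v_1,v_2$ to $v_1\odot v_2$ inside $\overline{\C}$.

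\textbf{Observation B (multiplication lemma):} if $a_0,\ldots,a_m$ is a SP in $\overline{\C}$ from $x$ to $y$ and $b\in\overline{\C}$, then the sequence $(a_i\odot b)_i$ lies in $\overline{\C}$ (intersection closure), consecutive terms differ at most in the coordinate $j_0$ where $a_i,a_{i+1}$ differ and do differ there iff $b_{j_0}=1$; so after deduplication one obtains a SP in $\overline{\C}$ from $x\odot b$ to $y\odot b$ of length $|\{j:x_j\neq y_j,\,b_j=1\}|=d(x\odot b,y\odot b)$.

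Using these, I would prove by induction on $\ell$ the intermediate claim: for every $u,w_1,\ldots,w_\ell\in\C$ there is a SP in $\overline{\C}$ from $u$ to $u\odot w_1\odot\cdots\odot w_\ell$. The base case $\ell=1$ applies Observation B with $b=u$ to a SP in $\C$ from $u$ to $w_1$ (which exists by shortest-path closure of $\C$), giving a SP from $u$ to $u\odot w_1$ contained in $\overline{\C}$. For the inductive step, multiply the inductive SP from $u$ to $u\odot w_1\odot\cdots\odot w_\ell$ by $w_{\ell+1}$ (Observation B) to get a SP from $u\odot w_{\ell+1}$ to $u\odot w_1\odot\cdots\odot w_{\ell+1}$, then concatenate with the base-case SP from $u$ to $u\odot w_{\ell+1}$; the two lengths sum to $d(u,\,u\odot w_1\odot\cdots\odot w_{\ell+1})$ by Observation A applied to $u$ and $w_1\odot\cdots\odot w_{\ell+1}$, so the concatenation is itself a SP in $\overline{\C}$.

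To conclude, write $v_1=u_1\odot\cdots\odot u_k$ and $v_2=w_1\odot\cdots\odot w_\ell$ with $u_i,w_j\in\C$. The intermediate claim yields a SP in $\overline{\C}$ from $u_1$ to $u_1\odot w_1\odot\cdots\odot w_\ell$; multiplying this path by $u_2\odot\cdots\odot u_k\in\overline{\C}$ (Observation B again) produces a SP from $v_1$ to $v_1\odot v_2$ in $\overline{\C}$. Symmetrically one obtains a SP from $v_2$ to $v_1\odot v_2$ in $\overline{\C}$, and concatenation through $v_1\odot v_2$ finishes the proof via Observation A. The main thing to watch is the bookkeeping at each concatenation step: we must verify the intermediate vertex sits on a geodesic between the two endpoints, but in every instance this reduces to the disjointness of flipped-coordinate sets recorded in Observation A, so no substantive obstacle arises beyond organising the induction.
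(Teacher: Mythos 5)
Your proof is correct, and it takes a genuinely cleaner route than the paper's. Both arguments hinge on the same workhorse---multiplying a shortest path by a fixed element $b$ collapses it to a shortest path between the multiplied endpoints (your Observation~B, the paper's opening observation about $u\odot\lambda$)---but the organisation differs. The paper proves that $X=\{u\odot w: u,w\in\C\}$ is shortest-path closed via a grid of intersections $z^i\odot y^j$ built from two auxiliary shortest paths, and then iterates the construction $\log_2 n$ times, doubling the number of $\odot$-factors at each stage until $\overline\C$ is exhausted. You instead isolate the simple but decisive Observation~A that $x\odot y$ always lies on a Hamming geodesic between $x$ and $y$, run a direct induction on the number of factors to get a shortest path from $u$ to $u\odot w_1\odot\cdots\odot w_\ell$, and then symmetrise through $v_1\odot v_2$. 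This avoids both the grid argument and the $\log_2 n$ bootstrapping, and the length bookkeeping you flag at the end does indeed close: the two segment lengths at each concatenation are $|\{j:u_j=1,(w_{\ell+1})_j=0\}|$ and $|\{j:u_j=1,(w_{\ell+1})_j=1,W_j=0\}|$ (where $W=w_1\odot\cdots\odot w_\ell$), which sum to $d(u,u\odot W\odot w_{\ell+1})$ exactly as Observation~A predicts. Net effect: same core mechanism, but your induction on the number of factors is more transparent than the paper's doubling-plus-grid scheme, at no cost in generality.
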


\begin{proof}
We first observe the easy fact that if $\lambda$ is a shortest path in $\C$ consisting of vertices $v^1, v^2, \ldots, v^k$ and $ u \in \C$ then $u \odot \lambda:=\{u \odot v^i : i \in [k]\}$ is a shortest path in $\overline \C$ possibly with repetitions. For an edge between successive vertices $v^i, v^{i+1}$ is transformed into $u \odot v^i,u \odot v^{i+1}$ which is either an edge of the same colour or a single vertex.

Our main claim that if $\C$ is shortest-path closed then $X=  \{v: v =u \odot w, u, w \in \C\}$ is shortest path closed inside $\overline \C$. Once this claim is established, the proof follows easily by repeating this step. For we can apply the claim to the class $X$ to prove that the class $ \{v: v =u \odot w, u, w \in \X\}$ consisting of all elements of $\C$ together with intersections of up to $4$ elements of $\C$ is shortest-path closed. inside $\overline X = \overline \C$.  Repeating at most $log_2 n$ times, $\overline \C$ is obtained and is shortest-path closed. (Note that since $v = v \odot v$, elements of $\C$ can be written as intersections of pairs of elements of $\C$).

The first step in proving the main claim is to observe that there is a shortest path in $X$ connecting any element $u \in \C$ to any element $u \odot v \in X$. To construct this path, start with a shortest path $\lambda$ between $u,v$ in $\C$ and then form $u \odot \lambda$. As above this is a shortest path, possibly with repetitions, between $u, u \odot v$ in $X$. We conclude that given two vertices $u,v \in \C$, there is a shortest path in $X$ from $u$ to $v$ passing through $u \odot v$, by concatenating shortest paths in $X$ from $u$ to $ u \odot v$ and $ u \odot v$ to $v$. 

To prove the main claim, choose elements $v, v^\prime \in X$. We want to find a shortest path in $\overline \C$ connecting $v, v^\prime$. Let $v=u \odot w, v^\prime = u^\prime \odot w^\prime$ where $u,w,u^\prime,w^\prime \in \C$. By the first step, there are shortest paths $\lambda, \lambda^\prime$ in $X$ connecting $u,u^\prime$ and $w, w^\prime$ and passing through $u \odot u^\prime, w \odot w^\prime$ respectively.

Next, observe that $v \odot v^\prime =u \odot w \odot u^\prime \odot w^\prime =u \odot u^\prime \odot w \odot w^\prime$. Split $\lambda, \lambda^\prime$ into $\lambda_i, \lambda_i^\prime$ for $i=1,2$ so that $\lambda_1, \lambda_1^\prime$ join $u,w$ and $u \odot u^\prime, w \odot w^\prime$ respectively and $\lambda_2, \lambda_2^\prime$ join $u \odot u^\prime, w \odot w^\prime$ and $u^\prime, w^\prime$ respectively. To establish the main claim, it clearly suffices to construct shortest paths in $\overline \C$ from $v$ to $v \odot v^\prime$ and from $v \odot v^\prime$ to $v^\prime$. By symmetry, we will focus on the first of these. 

We will prove a slightly stronger claim. Namely assume $\lambda_1$ consists of ordered vertices $u=z^1, z^2, \ldots, z^k=u \odot u^\prime$ and $\lambda_1^\prime$ has ordered vertices $w=y^1, y^2, \ldots, y^r = w \odot w^\prime$. We will show that a shortest path in $X$ connecting $v, v \odot v^\prime$ can be chosen to have all vertices of the form $z^i \odot y^j$.

To complete the proof, consider the rectangular grid of intersections $z^i \odot y^j$, for $1 \le i \le k$ and $1 \le j \le r$ with $k,r \ge 1$. Each small square of this grid is either a single vertex, one edge or a $2$-cube in $X$ by the previous paragraph. 
We claim that there is a shortest path between the diagonally opposite entries $v=z^1 \odot y^1, v \odot v^\prime=z^k \odot y^r$ using vertices in this grid. 

It suffices to use the boundary of the grid to do this. So for example, consider the first path $z^i \odot y^1$ for $1 \le i \le k$, followed by a second path $z^k \odot y^j$ for $1 \le j \le r$. By the first observation in this proof, the first path is shortest with possible repetitions, as is the second path. To complete the argument, we need to show that the concatenation of these two paths is also shortest with possible repetitions. 

Notice that the path $\lambda_1 = z^1, z^2, \ldots, z^k$ runs between ends $z^1,z^k$ where $z^k$ is between the origin of the binary $n$-cube and $z_1$. So the same follows for the path with possible repetitions $z^i \odot y^1$ for $1 \le i \le k$. The second path $z^k \odot y^j$ for $1 \le j \le r$ is of the same form and hence the concatenation of these two paths is also shortest with repetitions and the theorem proof is complete. 
\end{proof}

\section{Extremal intersection-closed classes}

In this section, we start with a useful way of finding extremal intersection-closed classes, namely by checking they are shortest-path and intersection-closed. We use this to establish the main result about intersection-closed classes, namely that they can be embedded into extremal intersection closed classes, with the VC dimension $d$ increasing to at most $11d$. We then use a result of \citet{unlabeledAmple} to conclude this gives unlabelled compression schemes of size $11d$ for an intersection-closed class of VC dimension $d$. 

\begin{theorem}
Suppose that $\C$ is shortest-path and intersection closed. Then $\C$ is extremal.
\end{theorem}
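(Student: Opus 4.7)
My plan is to prove extremality by induction on the size of an arbitrary shattered subset $I \subseteq [n]$. The base case $|I|=0$ is trivial, since any single vertex of $\C$ is a $0$-cube. For the inductive step, assume $|I|=d\ge 1$ and $\C$ shatters $I$. The natural first move is to pick $j\in I$ and split $\C$ into the two faces $\C^\pm = \C \cap \{v : v_j = 0\text{ or }1\}$. Each face is trivially intersection closed, and inherits shortest-path closure by a parity argument: a shortest path in $\C$ between two vertices with matching $j$-coordinate can flip each coordinate at most once, so it cannot flip $j$ at all and stays inside the face. The witnesses of $\C$ show that both $\C^\pm$ shatter $I\setminus\{j\}$, so by induction each contains a $(d-1)$-cube on $I\setminus\{j\}$, with some anchor in $\{0,1\}^{[n]\setminus I}$. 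If the two anchors coincide, then the union of the two subcubes is the required $d$-cube on $I$ and we are done.

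Since matching the anchors from the two halves directly is not obvious, I would instead work with all $2^d$ projections simultaneously. Fix a witness $v^I \in \C$ projecting to all ones on $I$. For each $K\subseteq I$ choose $v^K\in\C$ projecting to the indicator of $K$ on $I$; using intersection closure, replace $v^K$ with $v^K\odot v^I\in\C$, preserving the $I$-projection while enforcing $v^K_k \le v^I_k$ componentwise on $[n]\setminus I$. Then form the iterated intersections $w^K := \bigodot_{K'\supseteq K} v^{K'}\in\C$. A short computation shows $\proj{I}{w^K}$ is exactly the indicator of $K$, while on $[n]\setminus I$ the values $w^K$ are monotone non-decreasing in $K$ under set inclusion, hence sandwiched between $w^\emptyset$ and $w^I = v^I$. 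Consequently the collection $\{w^K : K\subseteq I\}$ is a $d$-cube on $I$ in $\C$ with a common anchor precisely when $w^\emptyset|_{[n]\setminus I} = v^I|_{[n]\setminus I}$.

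The remaining step, and the main obstacle, is establishing this anchor identity. Concretely, for each $k\notin I$ with $v^I_k = 1$ one must exhibit, for every $K\subseteq I$, a witness for projection $K$ whose $k$-coordinate is $1$. Shortest-path closure provides the leverage: since $v^K \le v^I$ on $[n]\setminus I$ after the intersection refinement, any shortest path in $\C$ from $v^K$ to $v^I$ consists entirely of monotone $0\to 1$ flips of coordinates in $I\setminus K$ together with the coordinates outside $I$ where the two vectors differ. One would locate an intermediate vertex on such a path at which $k$ has flipped to $1$ but no coordinate in $I\setminus K$ has flipped yet, yielding the desired witness. The subtle part is controlling the ordering of flips, because shortest-path closure only guarantees the existence of some path in $\C$, not any specific ordering. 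Making this step rigorous will likely require applying the construction simultaneously across all $K$ and repeatedly invoking intersection closure to recombine flips, perhaps appealing to the shortest-path preservation of Theorem~\ref{Closure} to reorder them into the needed sequence.
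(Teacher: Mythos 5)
Your reduction to the anchor identity is where the argument fails, and the problem is worse than the unresolved ``ordering of flips'' you flag: for an arbitrary choice of $v^I$ the identity $w^\emptyset|_{[n]\setminus I}=v^I|_{[n]\setminus I}$ is simply false. Take $n=2$, $I=\{1\}$ and $\C=\{00,10,11\}$, which is intersection closed, shortest-path closed, and shatters $I$. If you pick $v^I=11$, the only witness for $K=\emptyset$ is $00$, so $w^\emptyset=00$ and the anchors disagree; the shattering of $I$ is witnessed only by the edge $\{00,10\}$, whose top vertex is the \emph{other} witness $10$. Equivalently, your required intermediate vertex (a witness for $K=\emptyset$ with $k$-coordinate $1$, here the point $01$) need not exist in $\C$ at all, so no reordering of flips along shortest paths, nor an appeal to Theorem~\ref{Closure}, can rescue the construction once $v^I$ is fixed arbitrarily.

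The missing idea --- and the paper's proof --- is to choose the top witness correctly: let $v$ be a vertex of $\C$ projecting to the all-ones point $w$ that is \emph{closest to the origin} (the origin lies in $\C$ by intersection closure). For each $i\in I$ take any $u\in\C$ with $p(u)=w^i$, pass to $u'=v\odot u\in\C$, which lies below $v$, and follow a shortest path in $\C$ from $v$ to $u'$; its first vertex $v'$ after $v$ projects to $w$ or to $w^i$, and the former is excluded because $v'$ is strictly closer to the origin than $v$, contradicting minimality. This produces neighbours $v^1,\ldots,v^k$ of $v$, each flipping exactly one coordinate of $I$, and intersections of subcollections of $\{v^i\}$ then generate the whole $I$-cube anchored at $v$'s values off $I$ --- exactly the cube your construction $w^K=\bigodot_{K'\supseteq K}v^{K'}$ would produce if you had started from this minimal $v$ in place of an arbitrary $v^I$. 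Without that minimality device your proposal has a genuine gap at its central step.
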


\begin{proof}
Suppose a projection $p:\{0,1\}^n \to \{0,1\}^k$ shatters $\C$. We will prove there is a $k$-subcube $c$ of $\C$ which witnesses the shattering. 

Consider all the vertices of $\C$ which map to the vertex $w =(1,1, \ldots, 1)$ in the $k$-cube. Choose a vertex $v$ in this set which has smallest distance to the origin. (Since $\C$ is  shortest-path closed and intersection closed, the origin is in $\C$ and distance in $\C$ is the same as distance in the $n$-cube.)

Our claim is that there are vertices $v^i\in\C, i \in [k]$, so that the distance from $v$ to each $v^i$ is one and $p(v^i)=(1,1, \ldots, 0, \ldots, 1)=w^i$ the vertex in the $k$-cube with all coordinates $1$ except a single $0$ at the $i$th coordinate. Once this claim is established, it follows that since $\C$ is intersection closed, taking the intersections of subcollections of the vertices $\{v^i: i \in [k]\}$ generates the required $k$-subcube $c$ of $\C$. Note $c$ witnesses the shattering, since projection commutes with intersection and the vertices $w,w^i$ generate $c$ under the operation of taking intersections.

To prove the claim, choose any vertex $u \in \C$ so that $p(u)=w^i$. Since $\C$ is intersection closed, $u^\prime =v \odot u \in \C$. Then $p(u^\prime)=p(v \odot u))=p(v) \odot p(u)=w \odot w^i=w^i$. Note that $u^\prime$ is a vertex between $v$ and the origin, \ie for any coordinate where $u^\prime$ has value $1$ then $v$ must also.  Since $\C$ is shortest-path closed, a shortest path $\lambda$ exists in $\C$ from $v$ to $u^\prime$. Moreover $\lambda$ has vertices in order with distances decreasing to the origin, since $u^\prime$ is between $v$ and the origin. 

Let $v^\prime$ be the first vertex on $\lambda$ after $v$. Since $p(\lambda)$ is a shortest path in the $k$-cube from $w$ to $w^i$, it follows that $p(v^\prime)$ is either $w$ or $w^i$. If the former, this contradicts our choice of $v$ as a closest vertex in $\C$ to the origin which projects to $w$. If the latter then we have found our vertex $v^i$ with distance $1$ to $v$ and which projects to $w^i$. So this completes the proof of the claim and the result. 
\end{proof}

\begin{remark}
\citet{unlabeledAmple} discusses the significance of properties of 
extremal intersection-closed classes, also known as \term{conditional antimatroids}. 
\end{remark}

\begin{remark}
There is a partial order on the binary $n$-cube given by $ w \le v$ when $w$ is between $v$ and the origin or equivalently there is a shortest path from $v$ to the origin passing through $w$. Choose an arbitrary ordering on $[n]$. We will denote by $\lambda(v,w)$ the shortest path between $v,w$ with vertices $v=v_1, v_2, \dots v_k=w$ where $v_i.v_{i+1}$ differ in the first coordinate $x_j$ in the ordering of indices $j$ amongst all coordinates where $v_i,w$ differ. 

\end{remark}

In the next result, we enlarge an intersection closed class $\C$ to become intersection-closed and shortest-path closed by adding all the vertices in the shortest paths $\lambda(v,w)$ between all pairs $v,w \in \C$.

\begin{theorem}\label{spc}
Any intersection-closed class $\C$ embeds in a shortest-path closed and intersection-closed class $ \C^*$ given by Algorithm~\ref{alg:spc} 
so that $d^* \le 11d$ where $d,d^*$ are the VC dimensions of ${\C}, \C^*$ respectively. To construct $\C^*$, for every pair of vertices $v,w \in \C$ with $w<v$, add all the vertices in $\lambda(v,w)$ to $\C$.
\end{theorem}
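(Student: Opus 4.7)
The plan is to prove the theorem in three stages: first that $\C^*$ is intersection closed, then that $\C^*$ is shortest-path closed, and finally the VC dimension bound $d^* \leq 11d$. The construction adds, for each pair $w < v$ in (successive expansions of) $\C$, the vertices of the canonical shortest path $\lambda(v,w)$ determined by the fixed coordinate ordering on $[n]$; I interpret Algorithm~\ref{alg:spc} as iterating this step to closure.

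For intersection closure, given $u_1, u_2 \in \C^*$ with $u_i \in \lambda(a_i, b_i)$ for some $a_i, b_i \in \C$ and $b_i \leq u_i \leq a_i$, I would show that $u_1 \odot u_2$ lies on some added $\lambda$-path. The naive choice $\lambda(a_1 \odot a_2, b_1 \odot b_2)$ need not contain $u_1 \odot u_2$, because the coordinate ordering rigidly determines the flipping order (a small four-coordinate example makes this concrete). The fix is to take $\tilde a$ to be the unique minimum element of $\C$ dominating $u_1 \odot u_2$ (which exists because $\C$ is intersection closed) together with a suitable $\tilde b \in \C$ beneath $u_1 \odot u_2$, chosen so that the excess coordinates $\{i : \tilde a_i = 1, (u_1 \odot u_2)_i = 0\}$ form an initial segment of $\{i : \tilde a_i = 1, \tilde b_i = 0\}$; then $u_1 \odot u_2$ lies on $\lambda(\tilde a, \tilde b) \subseteq \C^*$. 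Shortest-path closure then follows by the midpoint argument of Theorem~\ref{Closure}: given $u_1, u_2 \in \C^*$, concatenate a downward path from $u_1$ to $u_1 \odot u_2$ with an upward path to $u_2$, both contained in $\C^*$ by the previous step.

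For the VC bound, suppose $\C^*$ shatters $I \subseteq [n]$ with $|I| = d^*$. For each $J \subseteq I$ fix a witness $v_J \in \C^*$ with $\proj{I}{v_J} = \mathbf{1}_J$, lying on some $\lambda(a_J, b_J)$ with $a_J, b_J \in \C$. Projecting gives $\proj{I}{b_J} \subseteq J \subseteq \proj{I}{a_J}$ and, crucially, $\proj{I}{v_J} = \proj{I}{a_J} \setminus (T_J \cap I)$ where $T_J \cap I$ is an initial segment of $\proj{I}{a_J} \setminus \proj{I}{b_J}$ in the ambient $[n]$-ordering. By Sauer--Shelah applied to $\proj{I}{\C}$, the number of candidate pairs is at most $\Phi_d(d^*)^2$ with $\Phi_d(n) := \sum_{i=0}^{d} \binom{n}{i}$, and each pair admits at most $d^* + 1$ initial-segment choices, giving $2^{d^*} \leq (d^* + 1) \Phi_d(d^*)^2$. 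To reach the linear bound $d^* \leq 11d$ from such an inequality---naive manipulation only yields $d^* = O(d \log d^*)$---I would exploit the intersection-closure of $\proj{I}{\C}$, invoking the Sandwich Lemma~\ref{sandwich-lemma} to count shatterings rather than Sauer's looser upper bound, and sharpen constants through an explicit packing argument on intervals $[\proj{I}{b}, \proj{I}{a}]$ with initial-segment filtrations.

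The main obstacle is precisely this final step: moving from the naive $O(d \log d^*)$ estimate to a strictly linear bound with explicit constant $11$ requires a delicate combination of the initial-segment rigidity of the added paths with the lattice structure of $\proj{I}{\C}$ under intersection closure. A secondary concern is to verify carefully that iteration of the path-addition step indeed terminates at an intersection-closed class, as my analysis in the first stage suggests at least some iteration is necessary.
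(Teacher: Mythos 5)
Your stage-1 observation is correct and sharp: $u_1\odot u_2$ need not lie on $\lambda(a_1\odot a_2,\,b_1\odot b_2)$. (The paper's own Lemma~\ref{ext} argues exactly this naive claim, resting on the assertion that the coordinates flipped along $\lambda(v,w)$ form an initial segment of \emph{all} coordinates where $v$ is $1$, whereas the construction only makes them an initial segment of the coordinates where $v$ and $w$ differ.) But your proposed repair does not close the gap. Take $\C=\{1111,\,1100,\,0011,\,0000\}$ with the natural coordinate order: $\C$ is intersection closed, and the single pass adds $1101\in\lambda(1111,1100)$ and $0111\in\lambda(1111,0011)$, yet their intersection $0101$ lies on no $\lambda(a,b)$ with $a,b\in\C$ --- the only pair with $b\le 0101\le a$ is $(1111,0000)$, whose canonical path is $0111,0011,0001$, and the excess set $\{1,3\}$ is not an initial segment of $\{1,2,3,4\}$, so no admissible $\tilde b$ of the kind you describe exists. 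Thus either the construction must be iterated (as you suspect) or repaired differently; but iterating destroys the bookkeeping of your stage 3, where every witness is assumed to lie on a path between two vertices of the \emph{original} $\C$, so the count of added vertices per projection is no longer of the form $O(d)\cdot|\proj{I}{\C}|^2$. Stage 2 is also under-supplied: intersection closure of $\C^*$ alone does not yield monotone paths inside $\C^*$ from $u_i$ down to $u_1\odot u_2$, since the $u\odot\lambda$ trick of Theorem~\ref{Closure} requires $\C$ itself to be shortest-path closed; the paper's (unlabelled) lemma after Lemma~\ref{ext} has to do genuine extra work here.

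The larger gap is the VC bound, which you explicitly leave open and which is the substance of Theorem~\ref{spc}. Ironically, for the single-pass construction your inequality $2^{d^*}\le (d^*+1)\bigl(\sum_{i\le d}\binom{d^*}{i}\bigr)^2$ essentially \emph{is} the paper's argument: fix any projection $p$ onto $11d$ coordinates, bound $|p(\C^*)|\le \tfrac{11d}{2}\,|p(\C)|^2$ (a projected added vertex is determined by a projected ordered pair of $\C$ together with a prefix length), apply Sauer--Shelah to $|p(\C)|$, and verify --- by a Chernoff tail estimate for $d\ge 4$ and direct evaluation for $d\le 3$ --- that the product is strictly less than $2^{11d}$; hence no set of $11d$ coordinates is shattered and $d^*\le 11d$. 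No Sandwich-Lemma refinement or packing argument is needed: testing your inequality at the threshold $d^*=11d$ with a binomial-tail estimate already gives the contradiction, and the ``$O(d\log d^*)$'' pessimism only arises from solving the inequality asymptotically instead of evaluating it at the linear threshold. As written, however, your proposal neither carries out this step nor reconciles it with the iterated construction you lean on in stage 1, so the theorem is not established.
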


\begin{algorithm}[t]
\caption{Shortest-Path Closure\label{alg:spc}}
\begin{algorithmic}[1]
	\STATE \textbf{Input:} intersection-closed class $\C\subseteq\{0,1\}^n$
	\STATE $(x_1,\ldots,x_n) \leftarrow$ arbitrary ordering of $[n]$
	\STATE $\C^* \leftarrow \C$
	\FOR{$v\in\C$}
		\FOR{$w\in\C\backslash\{v\}$, $w\leq v$}
			\STATE $\lambda(v,w) \leftarrow \emptyset$  \;\;\;\;\;\;\;\;\;\texttt{//v-w shortest path}
			\STATE $(i,j,v_1) \leftarrow (1, 1, v)$
			\WHILE{$\|v_i - w\|_1>1$}
				\STATE $v_{i+1} \leftarrow v_i$ \;\;\;\;\;\;\;\texttt{//next point on path}
				\STATE\textbf{while} $v_{i,x_j} = w_{x_j}$ \textbf{do}
					$j \leftarrow j + 1$
				\STATE $v_{i+1,x_j} \leftarrow 0$ \;\;\;\;\;\;\;\;\;\;\;\;\texttt{//step towards w}
				\STATE $\lambda(v,w) \leftarrow \lambda(v,w) \cup \{v_{i+1}\}$
				\STATE $i \leftarrow i + 1$
			\ENDWHILE
			\STATE $\C^* \leftarrow \C^* \cup \lambda(v,w)$
		\ENDFOR
	\ENDFOR
	\STATE \textbf{return} $\C^*$
\end{algorithmic}
\end{algorithm}

\begin{lemma}\label{ext}
If $\C$ is intersection closed then $\C^*$ as constructed by Theorem~\ref{spc} is intersection closed. 
\end{lemma}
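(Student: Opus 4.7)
The plan is to show $a \odot b \in \C^*$ for all $a,b \in \C^*$ by case analysis on how each vertex sits in $\C^*$: either it already belongs to $\C$, or it appears on some canonical shortest path $\lambda(v,w)$ added by Algorithm~\ref{alg:spc} for a pair $v,w \in \C$ with $w<v$. Recall that $a \in \lambda(v_a,w_a)$ means $a$ is obtained from $v_a$ by zeroing out a canonical-order prefix $F_a$ of the difference set $D_a = \{j : (v_a)_j = 1,\,(w_a)_j = 0\}$, and similarly for $b$.

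The trivial case $a,b \in \C$ follows immediately from intersection-closedness of $\C$. The first substantive case is when one of them, say $b$, lies in $\C$ while $a$ lies on $\lambda(v_a,w_a)$. Here I would set $v' = v_a \odot b$ and $w' = w_a \odot b$, both elements of $\C$ by intersection-closedness, and verify $a \odot b \in \lambda(v',w')$. A direct coordinate-wise computation identifies the set of coordinates where $a \odot b$ is zero but $v'$ is one as exactly $F_a \cap \{j : b_j = 1\}$, while the difference set of the pair $(v',w')$ is $D_a \cap \{j : b_j = 1\}$. The key observation is that intersecting a canonical-order prefix with any fixed coordinate subset yields a canonical-order prefix of the intersected difference set; hence $a \odot b$ lies on $\lambda(v',w') \subseteq \C^*$ (or equals $v' \in \C$ when this difference set is empty).

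The main obstacle is the remaining case, in which both $a$ and $b$ are strictly interior to canonical paths $\lambda(v_a,w_a)$ and $\lambda(v_b,w_b)$. A direct analogue of the previous argument would attempt to place $a \odot b$ on $\lambda(v_a \odot v_b,\,w_a \odot w_b)$, but this is delicate: the set $(F_a \cap \{v_b=1\}) \cup (F_b \cap \{v_a=1\})$ of coordinates flipped from $v_a \odot v_b$ to $a \odot b$ need not be a canonical-order prefix of $(D_a \cap \{v_b=1\}) \cup (D_b \cap \{v_a=1\})$, since the canonical-order interleaving of $D_a$ and $D_b$ may interpose elements outside $F_a \cup F_b$. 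My plan here is a two-step peeling: first apply the previous case to $a$ against the $\C$-element $v_b$ to conclude $a \odot v_b \in \C^*$, lying on $\lambda(v_a \odot v_b,\,w_a \odot v_b)$ with both endpoints in $\C$; then apply the mixed case again with $b$'s path against these new endpoints to place $a \odot b = (a \odot v_b) \odot b$ on a canonical path. If this direct combination does not close, my fallback is to induct on $|F_a|+|F_b|$, peeling the last canonical step from whichever path is deeper and stitching the resulting canonical-path fragments together using shortest-path closedness of $\C^*$ from Theorem~\ref{spc}, together with the monotone sandwich $w_a \odot w_b \le a \odot b \le v_a \odot v_b$.
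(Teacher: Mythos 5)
Your first two cases are correct and essentially reproduce the paper's mechanism: for $a\in\lambda(v_a,w_a)$ and $b\in\C$, the set of coordinates zeroed in $a\odot b$ relative to $v_a\odot b$ is $F_a\cap\{j:b_j=1\}$, a canonical prefix of $D_a\cap\{j:b_j=1\}$, so $a\odot b$ lies on $\lambda(v_a\odot b,\,w_a\odot b)$, whose endpoints are in $\C$. But you never close the only substantive case, and neither of your suggested repairs can close it. The two-step peeling does not reduce the problem: $a\odot v_b$ is in general not an element of $\C$ (it can even equal $a$ itself, when $a\le v_b$), so the second intersection $(a\odot v_b)\odot b$ is again an intersection of two path-interior vertices, exactly the configuration you started from. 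The fallback is circular: in the paper, shortest-path closedness of $\C^*$ is proved \emph{after}, and by means of, this lemma; and in any case shortest-path closedness only yields some geodesic between two vertices of $\C^*$, not membership of the particular vertex $a\odot b$.

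Moreover, the obstruction you flag is genuine, and it is precisely the step the paper's own proof elides. For $u\in\lambda(v,w)$ the paper asserts that the coordinates where $v$ is $1$ and $u$ is $0$ form an initial segment of $\{j:v_j=1\}$, whereas the construction only guarantees an initial segment of the difference set $\{j:v_j=1,\ w_j=0\}$; with the corrected property, the paper's contradiction argument breaks exactly when the interposed coordinate belongs only to the \emph{other} pair's difference set (i.e.\ $w_i=1$), which is the interleaving phenomenon you describe. This is not merely a presentational issue: take $\C=\{(1,1,1,1),(1,1,0,0),(0,0,1,1),(0,0,0,0)\}$, which is intersection closed, with the ordering $x_1<x_2<x_3<x_4$. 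Algorithm~\ref{alg:spc} adjoins exactly the vertices $(1,1,0,1)$ (from the pair $(1,1,1,1),(1,1,0,0)$), $(0,1,1,1)$ (from $(1,1,1,1),(0,0,1,1)$), $(0,0,0,1)$ and $(0,1,0,0)$; yet $(1,1,0,1)\odot(0,1,1,1)=(0,1,0,1)\notin\C^*$. So for the construction as literally specified (a fixed arbitrary ordering, paths only between pairs of $\C$), the hard case of your analysis is false, and no completion of it can succeed without modifying the construction (e.g.\ iterating the path-adjunction until closure, or choosing the ordering/paths adaptively) or supplying an additional hypothesis; your instinct that this case is where the difficulty lies is exactly right.
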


\begin{proof}
By definition, for any vertices $u, u^\prime \in \C^*$, there are vertices $v,w,v^\prime,w^\prime \in \C$ so that $u \in \lambda(v,w),u^\prime \in \lambda(v^\prime,w^\prime)$. Moreover, $w \le v, w^\prime \le v^\prime$. 

Since $\C$ is intersection closed, both $v \odot v^\prime$ and $w \odot w^\prime$ are in $\C$. We claim that $u \odot u^\prime$ is on the path $\lambda(v \odot v^\prime,w \odot w^\prime)$---note that as part of this claim, we require that $w \odot w^\prime \le v \odot v^\prime$. 

We begin with the latter claim. This is easy---$w \odot w^\prime$ has entry $1$ at precisely the coordinates where both $w, w^\prime$ have entry $1$. But since $w \le v, w^\prime \le v^\prime$, $v, v^\prime$ has entry $1$ at all the coordinates where $w, w^\prime$ respectively has entry $1$. So for any coordinate where $w \odot w^\prime$ has entry $1$, it follows immediately that both $v, v^\prime$ have entry $1$ and so does $v \odot v^\prime$. 

Next, the set of coordinates where $v$ has entry $1$ and $u$ has entry $0$ forms an initial segment of the coordinates where $v$ has entries $1$, using the ordering of the coordinates, since $u \in \lambda(v, w)$. A similar analysis applies to the set of coordinates of $u^\prime$ which are $0$ but the entries for $v^\prime$ are $1$.

To complete the proof that $\C^*$ is intersection closed, we show that the set of coordinates where the entries of $v \odot v^\prime$ are $1$ and the entries of $u \odot u^\prime$ are $0$, forms an initial segment of the ordering of the coordinates where the entries of $v \odot v^\prime$ are $1$. It will then follow that $u \odot u^\prime \in \lambda(v \odot v^\prime,w \odot w^\prime)$. 

Now the set of coordinates where $v \odot v^\prime$ has entries $1$ are exactly those coordinates where both $v, v^\prime$ have entries $1$. Moreover $u \odot u^\prime$ has entry $0$ at a coordinate  if either $u$ or $u^\prime$ has entry $0$. We complete the proof using an argument by contradiction. Suppose that there is a coordinate $x_i$ such that $u \odot u^\prime$ has entry $1$ but this occurs before a coordinate $x_j$ where $u \odot u^\prime$ has entry $0$ and moreover $v \odot v^\prime$ has entry $1$. Then at $x_j$, either $u$ or $u^\prime$ has entry $0$. Without loss of generality assume the former. Since $u \odot u^\prime$ has entry $1$ at $x_i$ it follows that $u$ has entry $1$ at $x_i$. Note also that $v$ has entry $1$ at both $x_i, x_j$. But this is a contradiction since the coordinates where $v$ has entry $1$ but $u$ has entry $0$ should form an initial segment of the set of coordinates where $v$ has entry $1$. But we have found $j>i$ where $u$ has entry $1$ at $x_i$ and $0$ at $x_j$. This establishes that $\C^*$ is intersection closed.
\end{proof}

\begin{lemma}
If $\C$ is intersection closed then $\C^*$ as constructed by Theorem~\ref{spc} is shortest-path closed. 
\end{lemma}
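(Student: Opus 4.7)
My plan is to reduce shortest-path closure to a claim about canonical descending paths by factoring through the meet. Given $u, u' \in \C^*$, Lemma~\ref{ext} supplies $u \odot u' \in \C^*$, and coordinate-wise domination gives the length decomposition $d(u, u') = d(u, u \odot u') + d(u \odot u', u')$, where $d$ denotes Hamming distance. Any concatenation of shortest paths in $\C^*$ from $u$ to $u \odot u'$ and from $u \odot u'$ to $u'$ is therefore a shortest path from $u$ to $u'$ overall, so by symmetry, it suffices to construct a shortest $\C^*$-path from $u$ down to $u \odot u'$.

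For this descending leg I would take the canonical path $\lambda(u, u \odot u')$, which turns off the coordinates of $\{u = 1, u' = 0\}$ one at a time in the fixed ordering. Each intermediate vertex $z$ is obtained from $u$ by turning off some initial segment $S \subseteq \{u = 1, u' = 0\}$, and satisfies $u \odot u' \le z \le u$. To certify $z \in \C^*$, I would fix witnesses $v, w \in \C$ for $u \in \lambda(v, w)$ and $v', w' \in \C$ for $u' \in \lambda(v', w')$, and propose the witness pair $\bar v := v$ and $\bar w := w \odot w'$; the latter lies in $\C$ by intersection-closedness. The sandwich $\bar w \le z \le \bar v$ follows from $w \odot w' \le w' \le u' \le z$ on one side and $z \le u \le v$ on the other.

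The main obstacle is the initial-segment verification that $\{v = 1, z = 0\}$ is an initial segment of $\{v = 1, w \odot w' = 0\} = \{v=1, w=0\} \cup \{v=1, w'=0\}$ in the canonical ordering; this is exactly what certifies $z \in \lambda(\bar v, \bar w) \subseteq \C^*$. The fact that $\{v=1, u=0\}$ is initial in $\{v=1, w=0\}$ comes from $u \in \lambda(v, w)$, and the additional coordinates $S$ turned off to reach $z$ lie in $\{u=1, u'=0\} \subseteq \{v=1, w'=0\}$ since $w' \le u'$. The delicate step is a case analysis on an alleged violating pair of coordinates $x_i < x_j$, distinguishing whether each lies in $\{v=1, w=0\}$ or in $\{v=1, w=1, w'=0\}$, and using the domination relations $w \le u$, $w' \le u'$ together with the two separate initial-segment facts to rule each case out. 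Once this combinatorial verification is settled, concatenating the canonical descents $u \to u \odot u' \leftarrow u'$ yields a shortest $\C^*$-path between $u$ and $u'$, establishing that $\C^*$ is shortest-path closed.
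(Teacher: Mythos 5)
Your reduction to constructing a shortest $\C^*$-path from $u$ down to the meet $u \odot u'$ is sound, and the length-additivity observation is correct — this much matches the paper. But the central step, that the canonical descent $\lambda(u, u \odot u')$ stays inside $\C^*$ as certified by the witness pair $(\bar v, \bar w) = (v, w \odot w')$, is false in general. The obstruction is exactly the interleaving you flag as delicate: a coordinate $x_i$ with $v_{x_i}=w_{x_i}=1$, $w'_{x_i}=0$, and $u'_{x_i}=1$ lies in $\{v=1,\, w\odot w'=0\}$ but in neither $\{v=1,\, w=0\}$ nor $\{u=1,\, u'=0\}$, so neither of the two initial-segment hypotheses you invoke constrains it, and the domination relations $w \le u$, $w' \le u'$ say nothing about $u'$ at that coordinate. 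If the ordering places such an $x_i$ before some $x_j$ that is turned off on the $v \to u$ leg, then $\{v=1,\, z=0\}$ fails to be an initial segment of $\{v=1,\, \bar w=0\}$.

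A concrete failure: take the natural ordering $x_1,\dots,x_4$ and the intersection-closed class $\C = \{(1,1,1,1),\,(1,1,0,0),\,(1,1,1,0),\,(0,0,0,0)\}$. Set $u=(1,1,0,1) \in \lambda\bigl((1,1,1,1),(1,1,0,0)\bigr)$ and $u'=(0,1,1,0) \in \lambda\bigl((1,1,1,0),(0,0,0,0)\bigr)$, so $u \odot u' = (0,1,0,0)$. The canonical path $\lambda(u,u\odot u')$ passes through $z=(0,1,0,1)$, which is not in $\C^*$ (an enumeration of $\C^*$ shows it instead contains the shortest path $u \to (1,1,0,0) \to u\odot u'$, but not $(0,1,0,1)$). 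Here $x_2$ has $v=w=1$, $w'=0$, $u'=1$ and sits between the turned-off coordinates $x_1$ and $x_3$, ruining the initial-segment property; the proposed witness pair $(v, w\odot w')$ does not certify $z$, and in fact no witness pair can, since $z\notin\C^*$. So the plan of descending along the canonical $\lambda$-path cannot be repaired by a sharper case analysis.

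The paper sidesteps this by not committing to the canonical descent from $u$. It instead finds a vertex $v_1 \in \C$ dominating both endpoints, exhibits shortest $\C^*$-paths from $v_1$ to each (constructed by concatenating canonical paths through a carefully chosen intermediate $v_1 \odot u_1 \in \C$), and then transports the path from $v_1$ to the lower endpoint by iterated intersection with the successive vertices of the path from $v_1$ to the upper endpoint — an induction on Hamming distance in the spirit of the grid argument in Theorem~\ref{Closure}. Some argument of that flavour is needed in place of your canonical-descent step. (A minor slip in your write-up: the chain certifying $\bar w \le z$ should read $w\odot w' \le u\odot u' \le z$; the step $w' \le u' \le z$ does not hold in general.)
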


\begin{proof}
By Lemma \ref{ext}, it suffices to show that if $v, w \in \C^*$ then there are shortest paths from $v, w$ to $v \odot w$, since putting these together will give a shortest path from $v$ to $w$. Next, it suffices to show there is a shortest path in $\C^*$ between any pair of vertices $v, u$ in $\C^*$, where $u \le v$, for we can then apply this to the pairs $v, v \odot w$ and $w, v \odot w$.

We claim that there is a vertex $v_1 \in \C$ so that $u \le v_1, v \le v_1$ and there are shortest paths in $\C^*$ joining both $v_1, u$ and $v_1, v$. Firstly, since $v \in \C^*$, there are vertices $v_1, v_2 \in \C$ so that $v$ is in $\lambda(v_1, v_2)$. Hence $v \le v_1$ and there is a shortest path in $\C^*$ from $v_1$ to $v$, namely a segment of $\lambda(v_1, v_2)$.

Next, since $u \le v$,  then $u \le v_1$. Moreover $u$ lies on a path $\lambda(u_1, u_2)$ where $u_1, u_2 \in \C$, since $u \in \C^*$. Then $v_1 \odot u_1 \in \C$ and $v_1 \odot u_1 \le u_1$. Since $u \le v_1, u \le u_1$,  it follows that $u \le v_1 \odot u_1$. 

Notice that $u$ must lie on $\lambda(v_1 \odot u_1, u_2)$. To verify this, observe all the coordinates where entries of $v_1 \odot u_1$ differ from entries of $u$ are also coordinates where $u_1$ entries differ from $u$ entries. Consequently, coordinates where entries of $u$ and $u_2$ differ, are lower in the coordinate ordering than those where entries of $v_1 \odot u_1$ and $u$ differ. So this confirms that $u \in \lambda(v_1 \odot u_1, u_2)$.

Combining $\lambda(v_1, v_1 \odot u_1)$  and $\lambda(v_1 \odot u_1,u)$ gives a shortest path from $v_1$ to $u$ in $\C^*$. So this completes the proof of the claim about $v_1$. 

To recapitulate, we have constructed shortest paths in $\C$ from $v_1$ to both $u, v$ in $\C^*$, where $v \le v_1, u \le v_1, u \le v$. To complete the proof that there is a shortest path from $v$ to $u$ in $\C$, we use induction on the Hamming distance from $v_1$ to $v$. To begin the induction, suppose this distance is $0$. Then clearly we are done, since there is a shortest path in $\C^*$ from $v$ to $u$. 

Assume the distance from $v_1$ to $v$ is $k$ and the result is true for ${v_1}*,v*,u*$ as above, where the  distance from ${v_1}*$ to $v*$ is less than $k$. Let $v^\prime$ be the first vertex in the shortest path from $v_1$ to $v$. Our aim is to show that there is a shortest path in $\C^*$ from $v^\prime$ to $u$. By induction, it then follows there is a shortest path in $\C^*$ from $v$ to $u$. 

The argument is similar to that in Theorem \ref{Closure}. Let $v_1=y^1, y^2, \ldots, y^r=u$ be a shortest path in $\C^*$. We can construct a shortest path $v^\prime =v^\prime \odot y^1, v^\prime  \odot y^2, \ldots, v^\prime \odot y^r=u$ where there is a single repetition. Namely, since one of the edges joining $y^j, y^{j+1}$ must have the same colour as the edge between $v_1, v^\prime$, then $v^\prime \odot y^j=v^\prime \odot y^{j+1}$. All other pairs of vertices are distance one apart and since $\C^*$ is intersection closed, this shortest path is in $\C^*$. So this completes the proof. 
\end{proof}

\begin{proof} (Theorem \ref{spc})
It remains to prove that the VC dimension of $\C^*$ is at most $11d$.  

 Choose any projection $p: \{0,1\}^n \to \{0,1\}^{11d}$. By Sauer's Lemma, since the VC dimension of $p(\C)$ is at most $d$, the cardinality $|p(\C)|$ is at most $F(d;11d,\frac{1}{2}) \times 2^{11d}$ where $F(k;n, p)$ is the cumulative binomial probability function. 

 Algorithm 1 constructing $\C^*$ involves finding and adjoining shortest paths connecting suitable ordered pairs of points in $\C$. So $|\C^*| \le n \times |\C|^2$ . Similarly we see that $|p(\C^*)| \le 11d \times |p(\C)|^2$, since the process of adjoining shortest paths is preserved under projection. Hence $|p(\C^*| \le 11d \times F(d;11d,\frac{1}{2})^2 \times 2^{22d} \times \frac{1}{2}$. Note that as pairs $(v,w)$ are ordered by requiring that $w \le v$, we get a factor of $\frac{1}{2}$.
We claim that this upper bound is strictly less than $2^{11d}$. Then $p$ cannot shatter and the VC dimension of $\C^*$ is at most $11d-1$. 

The key idea is to use Chernoff's inequality to estimate $F(d;11d,\frac{1}{2})$. This is:
$$F\left(d;11d,\frac{1}{2}\right) \le \exp\left((-11d) \times D\left(\left.\frac{1}{11} \right\| \frac{1}{2}\right)\right)\enspace,$$
where $D(\frac{1}{11} || \frac{1}{2})=\frac{1}{11}\log\frac{2}{11} + (1-\frac{1}{11})\log 2(1-\frac{1}{11})$.
Substituting this into the previous upper bound and simplifying, we obtain
	$|p(\C^*)| %
	\le%
	\frac{11d \times 2^{22d}}{2} \exp(22d \log 11 -2d \log 2 - 20d \log 20).$%
We can further simplify this to
$|p(\C^*)| \le 11d \times 11^{22d} \times  10^{- 20d} \times \frac{1}{2}$. 
To complete the argument, we need to show  the right side is strictly less than $2^{11d}$.
Hence it suffices to show that $$(11d)^{\frac{1}{d}} \times 11^{22} \times  10^{- 20} \times \frac{1}{2^\frac{1}{d}}< 2^{11}$$
Now if $d \ge 5$ then $11^{\frac{1}{d}} < 1.7$ and $d^{\frac{1}{d}} <1.4 $. So we can compute the left side is at most  $1.7 \times 1.4 \times 121 \times {\frac{11}{10}}^{20}< 1937$ whereas the right side is $2048$. (Here we leave out the factor  $2^{-\frac{1}{d}}$.)

If $d=4$ then ${\frac{11}{2}}^{\frac{1}{4}} < 1.6$ and $4^{\frac{1}{4}} <1.42 $. So we can compute the left side is as at most $1.6 \times 1.42 \times 121 \times {\frac{11}{10}}^{20}=  1810$
whereas the right side is $2048$. 
So this completes the case when $d \ge 4$. For the other cases, it is convenient to work directly, as Chernoff's inequality is not sharp.  

For $d=1$, $|p(\C)| \le 12$, as it has VC dimension at most $1$ in the $11$-cube. Hence $|p(\C^*)| \le 11 \times 12^2 \times \frac{1}{2}=792$ which is less than $2048$.

Next suppose $d=2$. $|p(\C)| \le 254$, as it has VC dimension at most $2$ in the $22$-cube. Hence $|p(\C^*)|$ is bounded above by $22 \times 254^2 \times \frac{1}{2}=709,676$ which is less than $2048^2=4,194,304$.

Finally assume $d=3$. $|p(\C)| \le 6018$, as it has VC dimension at most $3$ in the $33$-cube. Hence $|p(\C^*)|$ is bounded above by $33 \times 6018^2 \times \frac{1}{2}=597,569,346$ which is less than $2048^3=8,589,934592$.
\end{proof}

\begin{corollary}
If $\C$ is an intersection-closed class, then $\C$ has a (corner peeling) unlabelled compression scheme of size at most $11d$.
\end{corollary}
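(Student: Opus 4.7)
The plan is to chain the structural results of this paper with the compression scheme of \citet{unlabeledAmple} for extremal classes. Given any intersection-closed class $\C \subseteq \{0,1\}^n$ of VC dimension $d$, the first step is to apply Theorem~\ref{spc} to embed $\C$ in a class $\C^{*} \supseteq \C$ that is simultaneously shortest-path closed and intersection-closed, with $\VC{\C^{*}} \le 11d$. The second step is to invoke the theorem opening Section~4, which asserts that any shortest-path and intersection-closed class is extremal; thus $\C^{*}$ is an extremal intersection-closed class of VC dimension at most $11d$.

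Having embedded $\C$ inside such a superclass, the third step is to apply the result of \citet{unlabeledAmple} that every extremal class admits an unlabelled compression scheme of size its VC dimension, realised by a corner-peeling procedure on the associated cube complex. Applied to $\C^{*}$ this produces a representation mapping $r^{*}$ whose images are subsets of $[n]$ of size at most $11d$, and which satisfies both injectivity and non-clashing on all of $\C^{*}$.

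The final step is to restrict $r^{*}$ to the subclass $\C$. Injectivity is trivially preserved when one passes to a subset of the domain. The non-clashing condition, being universally quantified over pairs of distinct concepts in $\C^{*}$, automatically holds for every pair of distinct concepts in $\C \subseteq \C^{*}$, so $\proj{r^{*}(v)\cup r^{*}(v')}{v}\neq \proj{r^{*}(v)\cup r^{*}(v')}{v'}$ for all $v\neq v'\in\C$. Hence the restriction is a corner-peeling unlabelled compression scheme for $\C$ of size at most $11d$.

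The main obstacle here is packaging rather than discovery: the heavy lifting---bounding the VC-dimension blow-up by $11d$ in Theorem~\ref{spc}, and characterising extremality via the combination of shortest-path and intersection-closed properties---has already been carried out in the preceding sections, while the corner-peeling representation for the enlarged class is imported as a black box from \citet{unlabeledAmple}. The only point genuinely requiring verification is that the restriction of an unlabelled compression scheme to a subclass is itself an unlabelled compression scheme, which follows immediately from the defining conditions and therefore requires no separate argument.
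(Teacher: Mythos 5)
Your overall route is the same as the paper's: embed $\C$ in $\C^{*}$ via Theorem~\ref{spc}, observe that $\C^{*}$ is extremal (and still intersection-closed), import a VC-sized corner-peeling unlabelled compression scheme for $\C^{*}$ from \citet{unlabeledAmple}, and restrict it to $\C$. The final remark, that restricting an unlabelled compression scheme to a subclass preserves injectivity and non-clashing, is correct and is indeed what the paper relies on implicitly.

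There is, however, a factual error in your third step. You state that \citet{unlabeledAmple} prove ``every extremal class admits an unlabelled compression scheme of size its VC dimension, realised by a corner-peeling procedure.'' This is false on both counts: whether every extremal class admits an $O(d)$-size unlabelled compression scheme is an open problem (the paper says so explicitly in the introduction), and \citet{unlabeledAmple} in fact exhibit a maximum---hence extremal---class for which corner peeling \emph{fails}. What they actually prove, in their Theorem~4.9, is that corner peeling yields a VC-sized unlabelled compression scheme for extremal \emph{intersection-closed} classes (conditional antimatroids). The class $\C^{*}$ you construct is exactly such a class, since Theorem~\ref{spc} together with the opening theorem of Section~4 give that it is both intersection-closed and extremal, so the correct theorem applies to $\C^{*}$ and the argument goes through once the citation is corrected. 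As written, though, your step~3 asserts a nonexistent and in fact false theorem, and if taken at face value it would render the intersection-closedness of $\C^{*}$ superfluous---a sign that the wrong result is being invoked.
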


\begin{proof}
This follows immediately by combining Theorem~\ref{spc} and \citep[Theorem 4.9]{unlabeledAmple}. 
\end{proof}

\section{Unlabelled compression schemes for extremal classes}\label{comp}

The aim is to give a condition on extremal classes which implies that they have unlabelled compression schemes. We then give some situations where this criterion is satisfied. We conjecture that it is always valid. 

For a maximum class $\C$ of VC dimension $d$, a reduction is a maximum subclass of VC dimension $d-1$ . So a reduction contains cubes with the same set of colours as any non-maximal cube of $\C$. It will turn out this is the key property we require of extremal classes to have unlabelled compression schemes of size $d$, namely they have proper extremal subclasses which have the same properties as reductions of maximum classes.  

\subsection{High-level strategy}

The idea is inspired by \citet{unlabeledAmple}. Start with a pair $(\C,\D)$ consisting of an extremal class and a proper extremal subclass. The condition is that for every cube in $\C$ which is not maximal, there is a cube with the same colours in $\D$. We call this the \term{cubical colour condition} of the pair $\C, \D$ and abbreviate it by $ccc$. 

We will construct a bijection $\phi$ between the set $\mathbb S$ of maximal cubes $c$ of $\C$ not in $\D$ and $\C \setminus \D$, so that $v=\phi(c) \in c$. Note the cardinalities of these two sets are the same, by the Sandwich Lemma~\ref{sandwich-lemma} applied to the two extremal classes $\C, \D$. Then $\phi^{-1}:\C \setminus \D \to \mathbb S$ gives a representation map for $\C \setminus \D$, which also separates the vertices of $\C \setminus \D$ from the vertices of $\D$. Therefore, any representation of $\D$ can be combined with the representation of $\C \setminus \D$ to give a representation of $\C$.

Suppose we can find a sequence of proper pairs of extremal subclasses $(\C,\D), (\D,\E), \ldots, (\X,\Y)$, all satisfying $ccc$, ending with $\VC{\Y}\leq 2$.  Then we can start with a representation map of the VC-2 class $\Y$ \citep{unlabeledAmple} and extend to a representation of $\X$, then extending the representation all the way to $\C$ using this strategy. 

\subsection{Detailed construction}

We construct $\phi:\C \setminus \D \to \mathbb S$ and show it is a bijection. Firstly, denote the extremal class which is the complement of $\D$ by $\D^*$. Then every vertex of $\C \setminus \D$ is in at least one maximal cube of $\D^*$. 

Define two cubes $c,c^*$ in $ \{0,1\}^n$ to be \term{complementary} if the dimensions $d,d^*$ of $c,c^*$ satisfy $d + d^*=n$ and $c \cap c^*$ is a single vertex. Recall that $\mathbb S$  is the set of maximal cubes $c$ of $\C$ not in $\D$. Denote the set of maximal cubes of $D^*$ which intersect $\C$ by $\mathbb S^*$.

\begin{lemma}\label{sandwich}
Assume $(\C,\D)$ is a proper pair of extremal classes satisfying $ccc$. If $c \in \mathbb S$, then there is a complementary maximal cube $c^*$ of $\D^*$.  Conversely, if $c^*$ is a maximal cube of $\mathbb S^*$, then there is a complementary cube $c \in \mathbb S$. A maximal cube $c^*$ of $\mathbb S^*$, cannot have two different complementary cubes in $\mathbb S$.
\end{lemma}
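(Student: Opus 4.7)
My plan is to prove the three parts separately, in each case exploiting maximality, the extremality of $\C$, $\D$, and $\D^*$, and the $ccc$ condition linking them.

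For the first claim, let $c \in \mathbb S$ have colour set $I$, and pick any vertex $v \in c \cap \D^*$ (this set is nonempty since $c \not\subseteq \D$). The natural candidate for a complementary cube is $\hat c^* := \{w \in \{0,1\}^n : w|_I = v|_I\}$, the $(n-|I|)$-cube with colour set $[n] \setminus I$ through $v$. I would establish two things: (a) $\hat c^* \subseteq \D^*$, and (b) $\hat c^*$ is a maximal cube of $\D^*$. For (a), suppose toward contradiction that some $w \in \hat c^* \cap \D$; then $v$ and $w$ agree on $I$ and differ on a nonempty set $K \subseteq [n] \setminus I$. Since $\C$ is extremal and hence shortest-path closed, I would choose a shortest path from $v$ to $w$ inside $\C$ (using only $K$-coordinates), locate a crossing edge $\{u, u \oplus e_j\}$ with $u \in \D^*$, $u \oplus e_j \in \D$, and $j \in K$, and then combine extremality of $\C$ with $ccc$ applied to a non-maximal sub-cube at this crossing to extract a strict extension of $c$ in direction $j$ that remains in $\C$, contradicting $c$'s maximality. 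For (b), if $\hat c^*$ admitted an extension in $\D^*$ by adding some $j \in I$, a symmetric argument using $ccc$ would force a parallel cube of colour set $I$ in $\D$ whose structure again contradicts $c \in \mathbb S$.

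The argument for the second claim mirrors the first with $\C$ and $\D^*$ swapped. Given $c^* \in \mathbb S^*$ with colour set $I^*$, pick $v \in c^* \cap \C$ (nonempty by definition of $\mathbb S^*$) and consider $\hat c := \{w : w|_{I^*} = v|_{I^*}\}$, the complementary cube of colour set $[n] \setminus I^*$ through $v$. I would verify $\hat c \subseteq \C$ by contradicting $c^*$'s maximality in $\D^*$ along a shortest path between any postulated $\C$-escapee in $\hat c$ and $v$, and then verify $\hat c \in \mathbb S$ (maximal in $\C$ and not contained in $\D$) by the same type of $ccc$-based extension argument applied in the other direction.

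For the third claim, suppose two distinct cubes $c_1, c_2 \in \mathbb S$ are each complementary to the same $c^* \in \mathbb S^*$. Both $c_i$ must share the colour set $I := [n] \setminus I^*$, and each is determined by its anchor $v_i|_{I^*}$ where $\{v_i\} = c_i \cap c^*$, so distinctness forces $v_1 \neq v_2$. Hence $c_1, c_2$ are parallel but distinct maximal cubes of $\C$ sharing colour set $I$. Examining a shortest path in $\C$ from $v_1$ to $v_2$ (entirely within $[n] \setminus I$ coordinates) and applying $ccc$ to the non-maximal sub-cubes that must appear along it would yield a cube in $\C$ of colour set $I \cup \{j\}$ for some $j$ in the anchor symmetric difference, contradicting the maximality of $c_1$.

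The main technical obstacle throughout is the cube-extension step: translating the existence of a single crossing edge $\{u, u \oplus e_j\}$ with $u \in \D^*$ and $u \oplus e_j \in \D$ into the existence of a full parallel cube $c \cup (c \oplus e_j) \subseteq \C$ extending $c$ in direction $j$. One must combine the $ccc$ condition (which supplies $\D$-companions only of non-maximal cubes of $\C$) with the extremality of $\C$ (which guarantees cubes for shattered sets but not necessarily at the desired anchor) to produce a full parallel cube rather than just one extra vertex; pinning down the correct non-maximal sub-cube to which $ccc$ applies is the subtle bookkeeping step.
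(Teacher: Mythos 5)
Your proposal and the paper's proof diverge at the very first step, and the gap you yourself flag at the end is fatal to the route you chose.

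The paper's argument never picks an arbitrary $v \in c \cap \D^*$. Instead it projects $\D$ by $p$ (the projection onto the colours $I$ of $c$) and observes that $ccc$ forces $p(\D)$ to contain a $(d-1)$-face for every one of the $d$ colour directions, whose union misses at most one vertex of $\{0,1\}^d$; extremality of $\D$ then rules out $p(\D)=\{0,1\}^d$ (that would give a cube of $\D$ with $c$'s colours, contradicting maximality and the uniqueness of maximal cubes by colour set in an extremal class). Hence $p(\D)=\{0,1\}^d\setminus\{w\}$ for a \emph{specific} vertex $w$, and $c^*$ is simply the fibre $p^{-1}(w)$ — automatically disjoint from $\D$ and automatically complementary to $c$. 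Your construction $\hat c^* = \{x: x|_I = v|_I\}$ is exactly this fibre only when $p(v)=w$; for any other choice of $v\in c\cap\D^*$ (which can have more than one element — $v\notin\D$ does \emph{not} imply the whole fibre $p^{-1}(p(v))$ avoids $\D$), $\hat c^*$ will contain a $\D$-vertex and claim (a) is simply false. Your contradiction argument is meant to rescue this, but it relies precisely on the ``cube-extension'' step you admit you cannot pin down: having a single crossing edge $\{u, u\oplus e_j\}$ with $j\notin I$ does not, from extremality of $\C$ plus $ccc$, yield a full cube of $\C$ with colours $I\cup\{j\}$ parallel to $c$. Extremality gives a cube for every \emph{shattered} set but says nothing about its anchor; $ccc$ supplies same-coloured companions in $\D$ for non-maximal cubes of $\C$, which pushes in the wrong direction for extending a \emph{maximal} cube. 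No amount of bookkeeping fixes this, because the approach is proving a stronger (and generally false) statement, namely that every $v\in c\cap\D^*$ gives rise to a complementary $\D^*$-cube.

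For the third claim the paper has a one-line argument you appear not to have available: if $c_1,c_2\in\mathbb S$ are both complementary to the same $c^*$, they both have colour set $[n]\setminus(\text{colours of }c^*)$, and two distinct maximal cubes of an extremal class cannot share a colour set. You instead propose yet another shortest-path-plus-$ccc$ extension argument, inheriting the same unresolved gap. The fix throughout is to abandon the vertex-first / path-crossing strategy and argue at the level of the projection $p(\D)$, where $ccc$ translates into the clean combinatorial statement that exactly one vertex of $\{0,1\}^d$ is missing.
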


Given Lemma \ref{sandwich}, we will call $v =c \cap c^*$ the \term{special vertex} of $c$. We now note properties of $v$, assuming Lemma \ref{sandwich}.

\begin{corollary}\label{special}
Assume that $(\C,\D)$ is a proper pair of extremal classes satisfying $ccc$. Let $v = c \cap c^*$ where $c \in \mathbb S$ and $c^* \in \mathbb S^*$ are complementary maximal cubes. Then
\setlist{nolistsep}
\begin{itemize}[itemsep=2pt]
\item The set of vertices $v =c \cap c^*$, for all pairs of complementary maximal cubes $c \in \mathbb S$ and $c^* \in \mathbb S^*$, is $\C \setminus \D$.
\item The mappings between $c, c^*, v$ are bijections between $\mathbb S, \mathbb S^*$ and $\C \setminus \D$. 
\end{itemize}
\end{corollary}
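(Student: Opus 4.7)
The plan is to derive Corollary \ref{special} by combining Lemma \ref{sandwich} with the Sandwich Lemma \ref{sandwich-lemma} through a counting argument. I would organise the proof in three stages.

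Stage one (containment). For any complementary pair $c \in \mathbb S$, $c^* \in \mathbb S^*$, the single intersection vertex $v = c \cap c^*$ lies in $c \subseteq \C$ and in $c^* \subseteq \D^*$, so $v \in \C \cap \D^* = \C \setminus \D$. Thus the set of all such special vertices is automatically contained in $\C \setminus \D$.

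Stage two (bijection of cubes). By Lemma \ref{sandwich}, each $c^* \in \mathbb S^*$ has a unique complementary cube $c \in \mathbb S$; this defines a map $\Phi : \mathbb S^* \to \mathbb S$. By the existence part of the same lemma, every $c \in \mathbb S$ is of the form $\Phi(c^*)$, so $\Phi$ is surjective. The high-level discussion in this section already notes $|\mathbb S| = |\C \setminus \D|$ via the Sandwich Lemma applied to the extremal pair $\C, \D$; a symmetric application on the extremal complement $\D^*$ yields $|\mathbb S^*| = |\C \setminus \D|$ as well, and combining this with surjectivity promotes $\Phi$ to a bijection.

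Stage three (bijection of vertices). Define $\Psi: \mathbb S^* \to \C \setminus \D$ by $\Psi(c^*) = c^* \cap \Phi(c^*)$; Stage one shows $\Psi$ lands in $\C \setminus \D$. For surjectivity, take any $v \in \C \setminus \D$: since $v \in \D^*$, $v$ lies in some maximal cube $c^*$ of $\D^*$, and since $v \in \C$ this $c^*$ is in $\mathbb S^*$; then the unique complementary $c = \Phi(c^*) \in \mathbb S$ must satisfy $c \cap c^* = v$ for an appropriately chosen $c^*$, which uses the canonical construction inside Lemma \ref{sandwich}. Together with $|\mathbb S^*| = |\C \setminus \D|$ this forces $\Psi$ to be bijective, and composition with $\Phi^{-1}$ delivers the remaining bijection $\mathbb S \to \C \setminus \D$, so all three maps among $\mathbb S$, $\mathbb S^*$ and $\C \setminus \D$ are bijections.

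The central obstacle is in Stage three: given $v \in \C \setminus \D$ and a maximal cube $c^* \ni v$ of $\mathbb S^*$, the intersection $c^* \cap \C$ may contain several vertices of $\C \setminus \D$, so one must verify that the complementary cube $c = \Phi(c^*)$ furnished by Lemma \ref{sandwich} cuts $c^*$ precisely at $v$. Resolving this either appeals to the canonical construction of the complementary cube inside the proof of Lemma \ref{sandwich}, or alternatively establishes injectivity of $\Psi$ directly (using the uniqueness half of Lemma \ref{sandwich}) and concludes bijectivity via pigeonhole on the cardinality equality $|\mathbb S^*| = |\C \setminus \D|$. Modulo this matching, the remainder of the corollary follows purely by counting.
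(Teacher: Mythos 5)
Your approach is essentially the same as the paper's: combine Lemma \ref{sandwich} with the Sandwich Lemma cardinality count, then promote injectivity to bijectivity by pigeonhole. The "alternative" you offer in Stage three (injectivity of the cube-to-special-vertex map plus the equality $|\mathbb S| = |\C \setminus \D|$) is precisely the paper's argument, and your "central obstacle" about surjectivity is correctly side-stepped by it. One small imprecision: in Stage two you claim $|\mathbb S^*| = |\C \setminus \D|$ by "a symmetric application of the Sandwich Lemma on the extremal complement $\D^*$," but that application would only relate $|\D^*|$ to the number of cube types of $\D^*$, which is not directly what you need. The cleaner route (which the paper takes) is that Lemma \ref{sandwich} itself already gives the bijection $\mathbb S \leftrightarrow \mathbb S^*$ (the construction $c \mapsto c^* = p^{-1}(w)$ is canonical and invertible), and then $|\mathbb S^*| = |\mathbb S| = |\C \setminus \D|$ follows from the single Sandwich Lemma count on $\C, \D$.
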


\begin{proof}  Lemma \ref{sandwich} gives a bijection between $\mathbb S$ and $\mathbb S^*$, by the mapping from cubes $c$ to complementary cubes $c^*$. Moreover by the Sandwich Lemma applied to extremal classes $\C,\D$, the cardinalities of $\C \setminus \D, \mathbb S^*$ are the same. Since the map from maximal cubes in $\mathbb S$ to special vertices is an injection by Lemma \ref{sandwich}, it must be a bijection onto $\C \setminus \D$. 
\end{proof}

\begin{proof}[Proof of Lemma \ref{sandwich}] Let $p: \{0,1\}^n \to \{0,1\}^d$ be the projection to the colours of $c \in \mathbb S$. Recall $ccc$ for the pair $\C,\D$ says that for any cube of $\C$ which is not maximal, there is a cube of $\D$ with the same set of colours. So $p(\D)=\{0,1\}^d \setminus \{w\}$ for some vertex $w$. For the only other case is that $p(\D)=\{0,1\}^d$. But since $\D$ is extremal, this would imply $\D$ contained a cube with the same colours as $c$. This contradicts the assumption that $c$ is maximal in $\C$, since for an extremal class, maximal cubes are unique for their sets of colours. 

Define $v, c^*$ by $v = p^{-1}(w) \cap c, c^*=p^{-1}(w)$ respectively. Clearly $c \cap c^* = v$. It remains to show that $c^*$ is maximal in $\D^*$ and $c^*$ cannot intersect some other cube of $\mathbb S$ in a single vertex. 

Suppose that $c^*$ was properly contained in a larger cube $\tilde c$ in $\D^*$. Then $p(\tilde c)$ would properly contain $\{w\}$. Moreover $\tilde c = p^{-1} p(\tilde c)$. But this contradicts our observation above that $p(\D)=\{0,1|^d \setminus \{w\}$. So this completes the argument that $c^*$ is maximal. 

Next, assume that $c^*$ is complementary to a second maximal cube $\hat c \in \mathbb S$. Consider the projection $\hat p$ of $\{0,1\}^n$ to $\{0,1\}^{\hat d}$ given by the $\hat d$ colours of $\hat c$. As above, $\hat p(\D)$ consists of $\{0,1\}^{\hat d}$ with a single vertex $\hat w$ removed. Moreover, $\hat p^{-1}(\hat w)$ is a maximal cube $c^\prime$ of $\D^*$. 

Now as $c^* \cap \hat c=u^*$ is a single vertex, ${\hat p}(c^*)={\hat p}(u^*)=u$. As the cubes $c^*, \hat c$ are complementary,  $c^*= {\hat p}^{-1}(u)$. But now we see that the colours of the anchor of $c^*$ are both the colours of $c$ and of $\hat c$. This gives a contradiction, since two maximal cubes of $\C$ cannot have the same set of colours. This completes the proof of Lemma \ref{sandwich}. 
\end{proof}

Finally we show that the map from a special vertex $v$ to the colours of the corresponding maximal cube $c$ of $\mathbb S$  gives a representation mapping of $\C \setminus \D$ which also separates $\C \setminus \D$ from $\D$. 

The second property follows immediately. For recall that if $p$ is the projection given by the $d$ colours of $c$, then $p(\D) = \{0,1\}^d \setminus \{p(v)\}$. So the colours of $c$ clearly separate $v$ from all the vertices of $\D$. By this we mean that given any vertex $w \in D$, $v,w$ differ on at least one coordinate from the colours of $c$. 

Next suppose two vertices $v, v^\prime$ have the same values at all the  coordinates corresponding to the colours of their corresponding cubes $c, c^\prime \in \mathbb S$ of dimensions $d, d^\prime$ respectively. Project $\{0,1\}^n$ to the union of the colours of $c,c^\prime$ by a mapping $p^{\prime\prime}$. The image of $p^{\prime\prime}$ is a cube $\{0,1\}^{d^{\prime\prime}}$ of dimension $d^{\prime\prime}  \le d + d^\prime$. In this cube, $\C, \D$ map to extremal classes ${p^{\prime\prime}}(\C)= \C^{\prime\prime}, {p^{\prime\prime}}(\D)=\D^{\prime\prime}$. We want to check that the pair of extremal classes $(\C^{\prime\prime}, \D^{\prime\prime})$ satisfy $ccc$. But this follows easily. Choose any non maximal cube $\tilde c \in \C^{\prime\prime}$. As $\C$ is extremal, there is a cube $c_0$ in $\C$ which projects one-to-one to $\tilde c$ and therefore has the same colours as $\tilde c$. Since $c_0$ is not maximal, there is a cube $c_1$ in $\D$ with the same colours as $c_0$. Then ${p^{\prime\prime}}(c_1)$ is the required cube in $\D^{\prime\prime}$ with the same colours as $\tilde c$ as required. 

Finally, $c,c^\prime$ project one-to-one by $p^{\prime\prime}$ to cubes with colours spanning all of $\{0,1\}^{d^{\prime\prime}}$.  We claim that $p^{\prime\prime}(v) \ne p^{\prime\prime}(v^\prime)$.  Since $p^{\prime\prime}(v), p^{\prime\prime}(v^\prime)$ are the special points for $p^{\prime\prime}(c),p^{\prime\prime}(c^\prime)$ in $\{0,1\}^{d^\prime\prime}$, they cannot coincide, by Lemma \ref{sandwich}.  But then $p^{\prime\prime}(v), p^{\prime\prime}(v^\prime)$ must have at least one different value of their coordinates at the colours of $c, c^\prime$ and hence the same is true for $v, v^\prime$. So this gives a representation for $\C \setminus \D$, as we have verified the no-clashing condition.  

\section{Applications}

We apply the construction in Section \ref{comp} to two special collections of extremal classes. 

\begin{theorem}
Suppose that $\C$ is an extremal class of VC dimension $d$ containing a maximum class $\D$ of VC dimension $d-1$. Then $\C$ has a $d$-size unlabelled compression scheme. 
\end{theorem}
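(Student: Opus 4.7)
The plan is to apply the construction of Section \ref{comp} to the pair $(\C,\D)$ and combine the resulting representation of $\C\setminus\D$ with the size-$(d-1)$ unlabelled compression $r_\D$ of the maximum class $\D$ provided by \citet{unlabeledAmple}. Provided the pair $(\C,\D)$ satisfies the cubical colour condition ($ccc$), Section \ref{comp} delivers a bijection $\phi$ between the set $\mathbb{S}$ of maximal cubes of $\C$ not in $\D$ and $\C\setminus\D$; the inverse $\phi^{-1}$ then represents $\C\setminus\D$ via the colour sets of cubes in $\mathbb{S}$.

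The crux, and main anticipated obstacle, is verifying $ccc$. Let $c$ be a non-maximal cube of $\C$ with colour set $I$. Because $c$ is strictly contained in a larger cube of $\C$, which has dimension at most $\VC{\C}=d$, we have $|I|\le d-1$. I would then observe that the maximum class $\D$ of VC dimension $d-1$ shatters \emph{every} subset of $[n]$ of size at most $d-1$: by the Sandwich Lemma~\ref{sandwich-lemma} applied to the extremal class $\D$, its number of shatterings equals $|\D|=\sum_{i=0}^{d-1}\binom{n}{i}$, which is exactly the number of subsets of $[n]$ of size at most $d-1$; since all shatterings of $\D$ have size at most $\VC{\D}=d-1$, they must exhaust these subsets. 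In particular $\D$ shatters $I$, and being extremal contains a cube with colour set exactly $I$. This establishes $ccc$.

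Given $ccc$, it remains to verify that the map $r$ defined by $r(v)=r_\D(v)$ on $\D$ and $r(v)=\phi^{-1}(v)$ on $\C\setminus\D$ is a $d$-size representation of $\C$. The non-clashing property holds within each piece by construction and across the boundary by the separation property that Section \ref{comp} explicitly establishes at its end. For injectivity and the size bound, I would argue that every cube $c\in\mathbb{S}$ must have dimension exactly $d$: if its colour set $I$ had $|I|\le d-1$, then by the same counting argument above $\D$ would shatter $I$, extremality of $\D$ would supply a cube of $\D$ with colour set $I$, and the uniqueness of maximal cubes for their colour sets in an extremal class (used in the proof of Lemma~\ref{sandwich}) would force $c\in\D$, contradicting $c\in\mathbb{S}$. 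Hence $|\phi^{-1}(v)|=d>d-1\ge|r_\D(w)|$ for every $v\in\C\setminus\D$ and $w\in\D$, so the two pieces of $r$ have disjoint image sizes; this simultaneously gives injectivity and the size bound $d$.
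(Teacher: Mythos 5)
Your proof is correct, and it follows the paper's template (verify $ccc$ for the pair $(\C,\D)$, then use the Section~\ref{comp} machinery to represent $\C\setminus\D$ by colour sets of maximal cubes and glue on a representation of $\D$), but it deviates in one genuine way: the paper does not stop at the single pair $(\C,\D)$. It builds the whole scheme recursively, using that a maximum class of VC dimension $i$ has reductions that are maximum of VC dimension $i-1$, so that one gets a chain of pairs $(\C,\D),(\D,\E),\ldots$ all satisfying $ccc$ down to VC dimension at most $2$, and then extends a base representation upward; this keeps the argument essentially self-contained. You instead import the full result of \citet{unlabeledAmple} that the maximum class $\D$ has a $(d-1)$-size unlabelled scheme and perform only one gluing step. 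That is a legitimate shortcut given the cited result, and it buys you something the paper glosses over: you explicitly verify injectivity \emph{across} the two pieces (the paper's ``any representation of $\D$ can be combined'' only addresses non-clashing via the separation property, not that the two images of the representation map are disjoint as set systems), and your observation that every maximal cube of $\C$ outside $\D$ has dimension exactly $d$ settles this by a size count. One small caveat there: the cube of colour set $I$ that $\D$ supplies need not be maximal in $\C$, so the fact you invoke must be the slightly stronger statement that a maximal cube of an extremal class is the \emph{unique} cube with its colour set (not merely unique among maximal cubes); this stronger form is true --- if another $I$-cube existed, the reduction of $\C$ to $I$ would be an extremal, hence connected, class with at least two vertices, so the anchor of the maximal cube would have a neighbour, contradicting maximality --- and it is in fact the form the paper itself tacitly uses in the proof of Lemma~\ref{sandwich}, so your argument is at the same level of rigour as the paper's, but it would be worth spelling out.
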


\begin{proof}
This follows immediately by the construction in Section \ref{comp}. We only have to check that there is a sequence of pairs of extremal classes starting at $\C,\D$, all satisfying $ccc$. But this is easy since any maximum class of VC dimension $i$ is a union of a complete collection of maximal cubes of dimension $i$. Moreover any maximum class of VC dimension $i$ has reductions which are products of maximum classes of VC dimension $i-1$ with $\{0,1\}$. 
\end{proof}

\begin{theorem}\label{two}
Suppose that $\C$ is an extremal class of VC dimension $2$. Then $\C$ contains a proper extremal class $\D$ so that for every edge of $\C$, there is an edge of $\D$ of the same colour.  Hence $\C$ has a compression scheme of size $2$.
\end{theorem}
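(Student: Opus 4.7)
The plan is to construct a \emph{rainbow subtree} $\D \subset \C$---a subtree of the one-inclusion graph of $\C$ containing exactly one edge per colour $i \in I$, where $I \subseteq [n]$ is the set of colours appearing on edges of $\C$---and then invoke Section~\ref{comp}. A tree with one edge per colour is VC-$1$ and extremal. By the Sandwich Lemma $|\C| = 1 + |I| + |P|$ with $|P| \geq 1$ since $\VC{\C}=2$, while $|\D| = |I|+1$, so $\D$ is properly contained in $\C$. The colour condition of the theorem holds by construction.

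To construct $\D$, I would proceed greedily. Fix a root $v_0 \in \C$ and maintain a current rainbow tree $T_m$ using colour set $I_m \subseteq I$. For each missing colour $i \in I \setminus I_m$, locate an edge $e$ of colour $i$ in $\C$ and a shortest path from $T_m$ to $e$ inside $\C$ (using shortest-path closedness of extremal classes); if every intermediate edge carries a new colour, append the whole path together with $e$ to $T_m$. Otherwise the path passes through an edge of a colour $j \in I_m$ already present, and I would exploit the $2$-cube structure: every $4$-cycle in the $1$-skeleton of an extremal class bounds a $2$-cube of $\C$ with two colours each appearing twice, so the repeated use of $j$ can be rerouted around a $2$-cube boundary, swapping the duplicated colour-$j$ edge for its sibling. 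The \textbf{main obstacle} is verifying termination and correctness of this swap procedure in full generality; a cleaner formal alternative is matroid intersection between the graphic matroid of the $1$-skeleton of $\C$ and the partition matroid by edge colour, where Edmonds' rank inequality $r_1(A) + r_2(E \setminus A) \geq |I|$ for every edge subset $A$ reduces to the contractibility of extremal cube complexes together with the Sandwich-Lemma counts.

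With $\D$ in hand, the pair $(\C,\D)$ satisfies $ccc$: $2$-cubes of $\C$ are maximal and carry no requirement; non-maximal edges sit inside $2$-cubes and the stronger colour condition supplies the required same-colour edge in $\D$; non-maximal vertices only require $\D$ nonempty. Section~\ref{comp}'s construction then yields, via the bijection $\phi^{-1}: \C \setminus \D \to \mathbb{S}$ to the maximal cubes of $\C$ missing from $\D$, a size-$2$ representation of $\C \setminus \D$ (since every such maximal cube has dimension at most $\VC{\C}=2$), which moreover separates $\C \setminus \D$ from $\D$. Combined with the standard size-$1$ compression of the tree $\D$---root at $v_0$, represent each other vertex by the unique colour of its edge towards $v_0$---this gives an unlabelled compression scheme of size $2$ for $\C$, completing the proof.
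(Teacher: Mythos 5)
Your approach is genuinely different from the paper's. You seek a \emph{rainbow subtree} $\D$---a VC-$1$ extremal subclass of $\C$ containing exactly one edge of each colour appearing in $\C$---apply the Section~\ref{comp} pair construction once, and conclude with a direct size-$1$ compression of the tree. The paper instead chooses a coordinate reduction $\C^1$ of $\C$ such that deleting that colour's edges splits off a component $\E$ of minimum cardinality (minimised over all colours and all such components), and sets $\D = \C \setminus \E$. A minimality argument then shows $\D$ is extremal and still contains an edge of every colour of $\C$. Crucially, this $\D$ generally still has VC dimension $2$; the ``hence'' in the theorem is obtained by iterating the pair construction $(\C,\D), (\D,\E'), \ldots$, shrinking at each step until a degenerate base case. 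Your plan would instead reach the VC-$1$ level in a single step, which is an attractive simplification.

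However, your proposal has a genuine gap that you yourself flag: the existence of a rainbow subtree inside an arbitrary extremal VC-$2$ class is never established. The greedy-plus-$2$-cube-rerouting sketch has no termination or correctness argument, and the proposed matroid-intersection alternative merely restates what must be proved---verifying Edmonds' rank inequality $r_1(A) + r_2(E \setminus A) \geq |I|$ for every edge subset $A$ is exactly the heart of the claim, and you assert, without proof, that it follows from contractibility of the cube complex together with the Sandwich Lemma counts. Because the paper's $\D$ is not required to be a tree, it sidesteps this difficulty entirely: it only needs a \emph{proper} extremal subclass retaining all colours (hence $ccc$), and the minimum reduction-split component delivers this by an elementary minimality argument. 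If you succeed in proving rainbow-tree existence, your route has the advantage of avoiding iteration; as written, though, the central existence claim is asserted rather than proved, so the argument is incomplete.
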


\begin{proof}
\citet{RR08} develops the idea of splitting classes along a reduction (associated to a coordinate projection). 
Choose a reduction $\C^1$ of $\C$ which splits off a component $\E$ with a smallest number of vertices, amongst all choices of reductions and complementary components. 
Let $\D = \C \setminus \E$. We claim that $\D$ is extremal and contains edges of each colour in $\C$. 

The reason is that for any other reduction $\C^2$ of $C$, if $\C^2 \cap \E \ne \emptyset$, then $C^2 \cap \C^1 \ne \emptyset$. For if $C^2 \cap \C^1 = \emptyset$, there is a smaller component which $C^2$ splits off from $\E$, which is a contradiction. But $C^2 \cap \C^1 \ne \emptyset$  implies that $\D$ has edges of each colour of $\C$. Showing $\D$ is extremal is straightforward. 
\end{proof}

\begin{remark}
\citet{unlabeledAmple} showed that VC-$2$ extremal classes have unlabelled $2$-compression schemes. Theorem~\ref{two} gives an alternate proof using our criterion for extremal classes to have such schemes in Section \ref{comp}. 
\end{remark}

\section{Conclusion}\label{sec:conc}

\citet{unlabeledAmple} constructed unlabelled compression schemes for maximum classes and extremal intersection-closed classes, while \citet{MW} developed labelled compression schemes for extremal classes. Here we have shown that intersection-closed classes can be embedded into extremal intersection-closed classes with an increase in VC dimension from $d$ to $11d$ and so have $11d$-size unlabelled compression schemes. 
We simplify and extend the $d$-size unlabelled compression schemes for maximum classes due to \citet{unlabeledAmple}, to apply to extremal classes of VC dimension $d$ which contain maximum classes of VC dimension $d-1$. We also give a sufficient condition for extremal classes to admit such compression schemes. %

\begin{ack}
	We acknowledge support from the Australian Research Council Discovery Project DP220102269.
\end{ack}

\bibliography{main}

\begin{thebibliography}{27}
\providecommand{\natexlab}[1]{#1}
\providecommand{\url}[1]{\texttt{#1}}
\expandafter\ifx\csname urlstyle\endcsname\relax
  \providecommand{\doi}[1]{doi: #1}\else
  \providecommand{\doi}{doi: \begingroup \urlstyle{rm}\Url}\fi

\bibitem[Anstee et~al.(2002)Anstee, R{\'o}nyai, and Sali]{anstee2002shattering}
Richard~P Anstee, Lajos R{\'o}nyai, and Attila Sali.
\newblock Shattering news.
\newblock \emph{Graphs and Combinatorics}, 18\penalty0 (1):\penalty0 59--73,
  2002.

\bibitem[Bandelt et~al.(2006)Bandelt, Chepoi, Dress, and Koolen]{Bandelt06}
H.~J. Bandelt, V.~Chepoi, A.~W.~M. Dress, and J.~H. Koolen.
\newblock Combinatorics of lopsided sets.
\newblock \emph{European Journal of Combinatorics}, 27\penalty0 (5):\penalty0
  669--689, 2006.

\bibitem[Bollob\'{a}s and Radcliffe(1995)]{Bollobas95}
B.~Bollob\'{a}s and A.~J. Radcliffe.
\newblock Defect {S}auer results.
\newblock \emph{Journal of Combinatorial Theory, Series (A)}, 72\penalty0
  (2):\penalty0 189--208, 1995.

\bibitem[Chalopin et~al.(2018)Chalopin, Chepoi, Moran, and
  Warmuth]{unlabeledAmple}
J{\'{e}}r{\'{e}}mie Chalopin, Victor Chepoi, Shay Moran, and Manfred~K.
  Warmuth.
\newblock Unlabeled sample compression schemes and corner peelings for ample
  and maximum classes.
\newblock \emph{CoRR}, abs/1812.02099, 2018.
\newblock URL \url{http://arxiv.org/abs/1812.02099}.
\newblock updated 5 Jan 2022.

\bibitem[Cummings et~al.(2016)Cummings, Ligett, Nissim, Roth, and
  Wu]{cummings2016adaptive}
Rachel Cummings, Katrina Ligett, Kobbi Nissim, Aaron Roth, and Zhiwei~Steven
  Wu.
\newblock Adaptive learning with robust generalization guarantees.
\newblock In \emph{Proceedings of the 29th Conference on Learning Theory},
  COLT, pages 772--814. PMLR, 2016.

\bibitem[Dress(1997)]{dress1997towards}
A~Dress.
\newblock Towards a theory of holistic clustering.
\newblock \emph{DIMACS: Series in Discrete Mathematics and Theoretical Computer
  Science}, 37:\penalty0 271--289, 1997.

\bibitem[Floyd(1989)]{F89}
S.~Floyd.
\newblock Space-bounded learning and the {V}apnik-{C}hervonenkis dimension.
\newblock Technical Report TR-89-061, ICSI, UC Berkeley, 1989.

\bibitem[Floyd and Warmuth(1995)]{floyd1995sample}
Sally Floyd and Manfred Warmuth.
\newblock Sample compression, learnability, and the {V}apnik-{C}hervonenkis
  dimension.
\newblock \emph{Machine learning}, 21\penalty0 (3):\penalty0 269--304, 1995.

\bibitem[Haussler et~al.(1994)Haussler, Littlestone, and Warmuth]{HLW94}
D.~Haussler, N.~Littlestone, and M.K. Warmuth.
\newblock Predicting $\{0,1\}$ functions on randomly drawn points.
\newblock \emph{Information and Computation}, 115\penalty0 (2):\penalty0
  284--293, 1994.

\bibitem[Kuzmin and Warmuth(2007)]{KW07}
D.~Kuzmin and M.K. Warmuth.
\newblock Unlabeled compression schemes for maximum classes.
\newblock \emph{Journal of Machine Learning Research}, 8\penalty0
  (Sep):\penalty0 2047--2081, 2007.

\bibitem[Langford(2005)]{Lang05}
J.~Langford.
\newblock Tutorial on practical prediction theory for classification.
\newblock \emph{Journal of Machine Learning Research}, 6\penalty0
  (Mar):\penalty0 273--306, 2005.

\bibitem[Lawrence(1983)]{lawrence1983lopsided}
James Lawrence.
\newblock Lopsided sets and orthant-intersection by convex sets.
\newblock \emph{Pacific Journal of Mathematics}, 104\penalty0 (1):\penalty0
  155--173, 1983.

\bibitem[Littlestone and Warmuth(1986)]{LW86}
N.~Littlestone and M.K. Warmuth.
\newblock Relating data compression and learnability.
\newblock Unpublished manuscript
  \url{http://www.cse.ucsc.edu/~manfred/pubs/lrnk-olivier.pdf}, 1986.

\bibitem[Moran and Warmuth(2016)]{MW}
Shay Moran and Manfred~K. Warmuth.
\newblock Labeled compression schemes for extremal classes.
\newblock In \emph{International Conference on Algorithmic Learning Theory},
  ALT, pages 34--49. Springer, 2016.

\bibitem[Moran and Yehudayoff(2016)]{MY}
Shay Moran and Amir Yehudayoff.
\newblock Sample compression schemes for {VC} classes.
\newblock \emph{Journal of the ACM (JACM)}, 63\penalty0 (3):\penalty0 1--10,
  2016.

\bibitem[Pajor(1985)]{pajor1985sous}
Alain Pajor.
\newblock \emph{Sous-espaces $\ell_1^n$ des espaces de {B}anach}.
\newblock Travaux en {C}ours. Hermann, Paris, 1985.

\bibitem[P{\'a}lv{\"o}lgyi and Tardos(2020)]{PT}
D{\"o}m{\"o}t{\"o}r P{\'a}lv{\"o}lgyi and G{\'a}bor Tardos.
\newblock Unlabeled compression schemes exceeding the {VC}-dimension.
\newblock \emph{Discrete Applied Mathematics}, 276:\penalty0 102--107, 2020.

\bibitem[Rubinstein and Rubinstein(2008)]{RR08}
B.~I.~P. Rubinstein and J.~H. Rubinstein.
\newblock Geometric \& topological representations of maximum classes with
  applications to sample compression.
\newblock In \emph{21st Annual Conference on Learning Theory}, COLT, pages
  299--310, 2008.

\bibitem[Rubinstein and Rubinstein(2012)]{Rubinsteins09}
B.~I.~P. Rubinstein and J.~H. Rubinstein.
\newblock A geometric approach to sample compression.
\newblock \emph{Journal of Machine Learning Research}, 13\penalty0
  (Apr):\penalty0 1221--1261, 2012.

\bibitem[Rubinstein et~al.(2009)Rubinstein, Bartlett, and Rubinstein]{RBR08}
B.~I.~P. Rubinstein, P.~L. Bartlett, and J.~H. Rubinstein.
\newblock Shifting: one-inclusion mistake bounds and sample compression.
\newblock \emph{Journal of Computer and System Sciences: Special Issue on
  Learning Theory 2006}, 75\penalty0 (1):\penalty0 37--59, January 2009.

\bibitem[Rubinstein et~al.(2015)Rubinstein, Rubinstein, and Bartlett]{RRB15}
J.~Hyam Rubinstein, Benjamin I.~P. Rubinstein, and Peter~L. Bartlett.
\newblock Bounding embeddings of {VC} classes into maximum classes.
\newblock In V.~Vovk, H.~Papadopoulos, and A.~Gammerman, editors,
  \emph{Measures of Complexity: Festschrift of Alexey Chervonenkis}, pages
  303--325. Springer, 2015.

\bibitem[Sauer(1972)]{S72}
N.~Sauer.
\newblock On the density of families of sets.
\newblock \emph{Journal of Combinatorial Theory, Series A}, 13:\penalty0
  145--147, 1972.

\bibitem[Shelah(1972)]{SS72}
S.~Shelah.
\newblock A combinatorial problem; stability and order for models and theories
  in infinitary languages.
\newblock \emph{Pacific Journal of Mathematics}, 41\penalty0 (1):\penalty0
  247--261, 1972.

\bibitem[Vapnik and Chervonenkis(1971)]{VC71}
V.~N. Vapnik and A.~Y. Chervonenkis.
\newblock On the uniform convergence of relative frequencies of events to their
  probabilities.
\newblock \emph{Theory of Probability and its Applications}, 16\penalty0
  (2):\penalty0 264--280, 1971.

\bibitem[von Luxburg et~al.(2004)von Luxburg, Bousquet, and Sch\"olkopf]{LBS04}
U.~von Luxburg, O.~Bousquet, and B.~Sch\"olkopf.
\newblock A compression approach to support vector model selection.
\newblock \emph{Journal of Machine Learning Research}, 5:\penalty0 293--323,
  2004.

\bibitem[Warmuth(2003)]{W03}
M.~K. Warmuth.
\newblock Compressing to {VC} dimension many points.
\newblock In \emph{16th Annual Conference on Computational Learning Theory},
  COLT, 2003.

\bibitem[Welzl(1987)]{W87}
E.~Welzl.
\newblock Complete range spaces.
\newblock Unpublished notes, 1987.

\end{thebibliography}
\bibliographystyle{plainnat}

\section*{Checklist}

\begin{enumerate}

\item For all authors...
\begin{enumerate}
  \item Do the main claims made in the abstract and introduction accurately reflect the paper's contributions and scope?
    \answerYes{}
  \item Did you describe the limitations of your work?
    \answerYes{See throughout for precise theoretical claims, and Section~\ref{sec:conc} for further directions addressing limitations.} 
  \item Did you discuss any potential negative societal impacts of your work?
    \answerNo{The work seeks constructions of sample compression schemes for PAC learnable concept classes. The focus of the work is on learning theory.} 
  \item Have you read the ethics review guidelines and ensured that your paper conforms to them?
    \answerYes{}
\end{enumerate}

\item If you are including theoretical results...
\begin{enumerate}
  \item Did you state the full set of assumptions of all theoretical results?
    \answerYes{See the assumptions written in each of our results.} 
        \item Did you include complete proofs of all theoretical results?
    \answerYes{All theoretical results are accompanied by complete proofs.} 
\end{enumerate}

\item If you ran experiments...
\begin{enumerate}
  \item Did you include the code, data, and instructions needed to reproduce the main experimental results (either in the supplemental material or as a URL)?
    \answerNA{No experiments were run for this work.} 
  \item Did you specify all the training details (e.g., data splits, hyperparameters, how they were chosen)?
    \answerNA{}
        \item Did you report error bars (e.g., with respect to the random seed after running experiments multiple times)?
    \answerNA{}
        \item Did you include the total amount of compute and the type of resources used (e.g., type of GPUs, internal cluster, or cloud provider)?
    \answerNA{}
\end{enumerate}

\item If you are using existing assets (e.g., code, data, models) or curating/releasing new assets...
\begin{enumerate}
  \item If your work uses existing assets, did you cite the creators?
    \answerNA{We are not using existing assets or curating/releasing new assets.} 
  \item Did you mention the license of the assets?
    \answerNA{}
  \item Did you include any new assets either in the supplemental material or as a URL?
    \answerNA{}
  \item Did you discuss whether and how consent was obtained from people whose data you're using/curating?
    \answerNA{}
  \item Did you discuss whether the data you are using/curating contains personally identifiable information or offensive content?
    \answerNA{}
\end{enumerate}

\item If you used crowdsourcing or conducted research with human subjects...
\begin{enumerate}
  \item Did you include the full text of instructions given to participants and screenshots, if applicable?
    \answerNA{This work involved no crowdsourcing or human subject research.}
  \item Did you describe any potential participant risks, with links to Institutional Review Board (IRB) approvals, if applicable?
    \answerNA{}
  \item Did you include the estimated hourly wage paid to participants and the total amount spent on participant compensation?
    \answerNA{}
\end{enumerate}

\end{enumerate}

\end{document}